\documentclass[11pt]{article}

\usepackage[final]{acl}

\usepackage{times}
\usepackage{latexsym}

\usepackage[T1]{fontenc}

\usepackage[utf8]{inputenc}

\usepackage{microtype}

\usepackage{inconsolata}

\usepackage{amsmath, amssymb, amsthm}
\usepackage{amsthm}
\usepackage{mathtools}
\usepackage{xurl}
\usepackage{tabularx}
\newtheorem{theorem}{Theorem}[section]   
\newtheorem{lemma}[theorem]{Lemma}       

\theoremstyle{definition}

\theoremstyle{remark}
\usepackage{algorithm}
\usepackage{algorithmicx}
\usepackage{algpseudocode}

\usepackage[table]{xcolor}
\usepackage{booktabs}
\usepackage{graphicx}
\usepackage{caption}
\usepackage{hyperref}
\usepackage[all]{hypcap}
\title{BranPO: Scalable Contrastive Branch Sampling for Long-Horizon Agentic Reinforcement Learning}
\newcommand{\think}[1]{\textcolor{blue}{\texttt{<thinking>}} #1 \textcolor{blue}{\texttt{</thinking>}}}
\newcommand{\search}[1]{\textcolor{cyan}{\texttt{<search>}} #1 \textcolor{cyan}{\texttt{</search>}}}
\newcommand{\info}[1]{\textcolor{brown}{\texttt{<information>}} #1 \textcolor{brown}{\texttt{</information>}}}
\newcommand{\answer}[1]{\textcolor{purple}{\texttt{<answer>}} #1 \textcolor{purple}{\texttt{</answer>}}}

\author{
 \textbf{Yubao Zhao\textsuperscript{1}},
 \textbf{Weiquan Huang\textsuperscript{1}},
 \textbf{Sudong Wang\textsuperscript{1}},
 \textbf{Ruochen Zhao\textsuperscript{2}},
\\
 \textbf{Chen Chen\textsuperscript{2}},
 \textbf{Yao Shu\textsuperscript{1}},
 \textbf{Chengwei Qin\textsuperscript{1}}
\\
\\
 \textsuperscript{1}The Hong Kong University of Science and Technology (Guangzhou),\\
 \textsuperscript{2}Nanyang Technological University
\\
 \small{
   \textbf{Correspondence:} \href{mailto:yzhao440@connect.hkust-gz.edu.cn}{yzhao440@connect.hkust-gz.edu.cn}, \href{mailto:chengweiqin@hkust-gz.edu.cn}{chengweiqin@hkust-gz.edu.cn}
 }
}

\begin{document}
\maketitle

\begin{abstract}
Agentic reinforcement learning enables large language models to perform multi-turn planning and tool use, but long-horizon training remains challenging under sparse trajectory-level rewards, where a single outcome is uniformly assigned to all decisions. Prior methods introduce finer-grained supervision via tree-based exploration or process-level evaluation, but often incur high cost or produce noisy credit signals. In agentic trajectories, early mistakes may still be corrected by later actions, while seemingly promising intermediate states can fail due to poor subsequent decisions. We call this property non-monotonic correctness, which makes outcome rewards or state values insufficient for guiding what actions should be taken from each state. To address this, we propose Branching Relative Policy Optimization (\textbf{BranPO}), a value-free method that constructs localized contrastive supervision without dense rewards. BranPO truncates trajectories at intermediate prefixes and resamples continuations to form contrastive branches that share the same prefix but diverge in final outcomes, thereby isolating decisions that drive success or failure. We further introduce difficulty-aware branch sampling and Redundant Step Masking to improve sampling efficiency and suppress redundant updates. Experiments show that BranPO consistently outperforms diverse baseline categories across multiple multi-hop QA benchmarks without additional training cost, and generalizes to broader long-horizon agentic tasks with consistent improvements. Our code is available at https://github.com/YubaoZhao/BranPO.
\end{abstract}

\section{Introduction}
Large language models (LLMs) have increasingly evolved from passive text generators into autonomous agents capable of multi-turn interaction, planning, and tool use. Agentic reinforcement learning (RL) further improves these capabilities through policy optimization \cite{liu2025deepseek, minimaxm2, kimi-researcher}, enabling progress on long-horizon tasks such as deep research \cite{team2025tongyi}, code generation \cite{feng2025towards}, and GUI-based assistance \cite{qin2025ui}

\begin{figure}
    \centering
    \includegraphics[width=\linewidth]{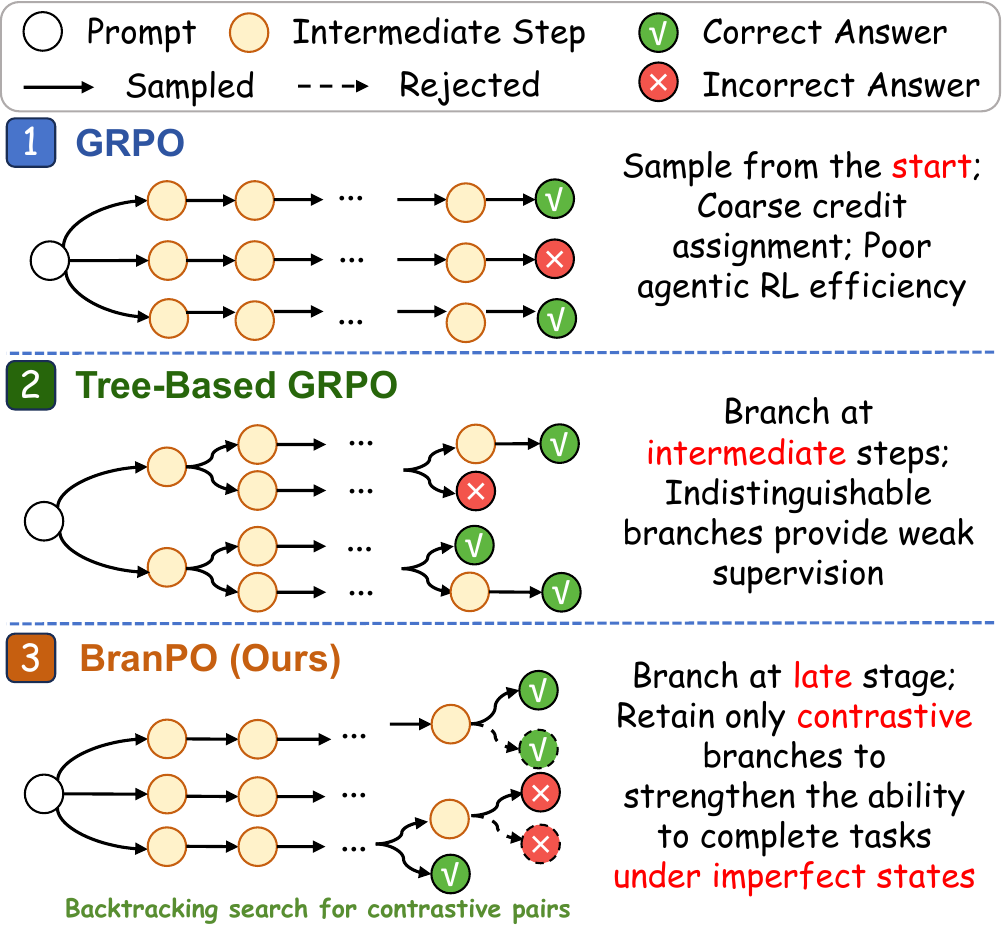}
    \caption{Comparison of sampling process between GRPO, tree-based GRPO, and BranPO.}
    \label{fig:method_compare}
\end{figure}
Despite recent progress, training multi-turn agents remains difficult under sparse rewards \cite{xue2025simpletir, qian2025toolrl, song2025r1}, where a single trajectory-level outcome supervises all actions \cite{liu2025gem}. Existing methods seek finer-grained credit assignment through process reward models \cite{zhang2025process, zhang2025criticsearch, zeng2025reinforcing}, tree-based \cite{yang2025treerpo,ji2025tree}, or entropy-guided branch sampling~\cite{shen2025carl,dong2025aepo,dong2025arpo}. However, these approaches remain limited in long-horizon interaction. Process rewards often rely on LLM-as-a-Judge or task-specific evaluators, which can be unreliable under incomplete or delayed evidence. Tree-based and entropy-guided methods require additional branch rollouts; yet as horizons grow, outcome-critical steps become sparse, so many sampled branches have continuations that either all succeed or all fail, incurring substantial computation for weak learning signals.

We argue that this limitation reflects a structural property of multi-turn agentic tasks, which we call \textbf{non-monotonic correctness}. In such tasks, correctness does not accumulate monotonically along a trajectory. A flawed intermediate state can be recovered through later corrective actions, whereas a promising prefix may still fail due to poor subsequent decisions. This makes isolated judgments of intermediate states unreliable, because the value of a prefix depends on how the agent continues from it. Earlier uncertainty or errors may still be resolved through later query reformulation, evidence integration, or final answer construction, making late-stage actions especially consequential in determining whether a prefix is corrected into success or carried into failure. Therefore, effective supervision should prioritize continuations from intermediate states, including imperfect ones, while reinforcing high-quality earlier decisions, so that the agent learns to build useful prefixes and correct rather than propagate earlier uncertainty.

Motivated by this, we propose \textbf{Bran}ching Relative \textbf{P}olicy \textbf{O}ptimization (\textbf{BranPO}), a contrastive dynamic branch sampling method for multi-turn agent training. As shown in Figure~\ref{fig:method_compare}, BranPO turns completed trajectories into targeted training signals by searching backward for late-stage branching points, avoiding expensive tree expansion over the full interaction horizon. From each selected prefix, BranPO resamples alternative continuations and retains only contrastive branches whose continuations lead to different final outcomes, concentrating learning on decisions that truly affect task success. BranPO further applies hierarchical group-relative optimization: prefix-level advantages distinguish more and less promising branching contexts, while branch-level advantages contrast successful and failed continuations under the same shared prefix. This yields a compact, high-signal objective for long-horizon, non-monotonic agentic tasks.

To improve efficiency, we introduce difficulty-aware branch sampling, which adaptively allocates sampling budgets based on accuracy. To mitigate the continuation bias, we propose Redundant Step Masking (RSM), which identifies and masks redundant tail steps during policy updates, thereby reinforcing informative supervision.

Across diverse multi-hop QA benchmarks, BranPO achieves consistent improvements over GRPO and strong agentic RL variants, including explicit process-reward methods, tree-based branch sampling, and entropy-guided branch sampling, without increasing the training budget. Additionally, BranPO acquires long-horizon search behaviors substantially faster, and its benefits generalize to 30+ turn agentic tasks and tasks beyond search.

In summary, our contributions are as follows:
\begin{itemize}

\item We introduce BranPO, the first agentic RL algorithm that adapts policy optimization to the non-monotonic credit structure of long-horizon multi-turn agentic tasks via contrastive, outcome-divergent branch learning.
\item We introduce difficulty-aware branch sampling and Redundant Step Masking to improve exploration efficiency and suppress redundant behaviors.
\item Extensive experiments demonstrate that BranPO consistently outperforms prior methods in long-horizon agentic tasks.
\end{itemize}

\begin{figure*}[t]
  \centering
  \includegraphics[width=\textwidth]{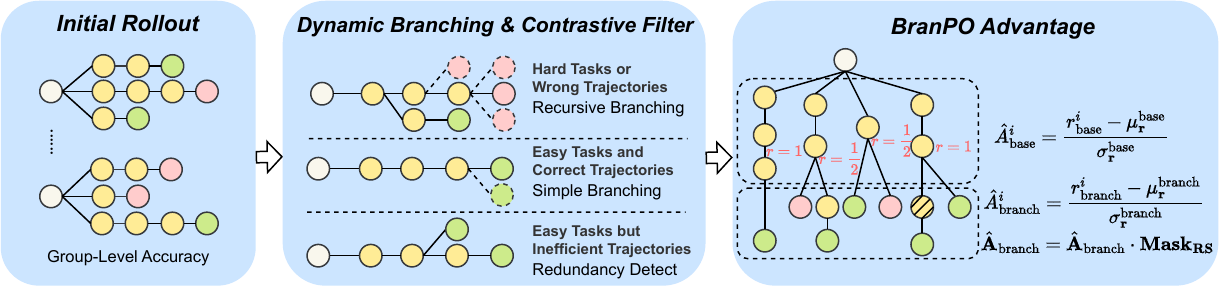}
  \caption{Overview of BranPO. After the initial rollout, group accuracy is computed and branching budgets are assigned based on task accuracy and trajectory reward. Simple branching is applied to correct trajectories in easy tasks, while recursive branching is used for hard tasks or incorrect trajectories. Redundancy checks are performed on inefficient trajectories within easy tasks. Within a task, shared prefixes from all trajectories form a base group, while branched continuations from all trajectories constitute a branch group for relative advantage computation, with advantages corresponding to inefficient steps masked.}
  \label{fig:method}
  \vskip -0.1in
\end{figure*}

\section{Related Work}
\subsection{Search Agents}
Search agents have emerged as a pivotal approach for open-domain problem solving by integrating reasoning with external tool use. Representative studies such as Search-o1 \cite{li2025search}, WebThinker \cite{li2025webthinker}, and Search-R1 \cite{jin2025search} demonstrate how iterative reason-retrieve-synthesize loops can be optimized through reinforcement learning. Despite recent progress in query generation \cite{jiang2025deepretrieval}, self-reflection \cite{fu2025re}, and data efficiency \cite{jiang2025s3}, search agents continue to face significant challenges in long-horizon tasks \cite{gao2025beyond}. In particular, the lack of fine-grained process supervision limits existing feedback mechanisms from effectively guiding intermediate steps within long interaction sequences \cite{liu2026pica}.

\subsection{Credit Assignment in RL}
Credit assignment remains a fundamental bottleneck in training LLMs for complex, long-horizon tasks. Standard RL approaches \cite{ouyang2022training} typically rely on sparse, outcome rewards, resulting in high variance and ambiguity. Existing methods for finer-grained supervision fall into two categories: (1) \emph{Monte Carlo estimation}, which branches from intermediate states and re-runs rollouts to estimate expected rewards. For example, VinePPO \cite{kazemnejad2024vineppo} operates at the token level, Tree-GRPO \cite{ji2025tree} and ReasonRAG \cite{zhang2025process} at the turn level, while other work exploits uncertainty signals such as entropy \cite{shen2025carl,dong2025aepo,dong2025arpo} to guide efficient branching. (2) \emph{Explicit step-level reward methods}, which directly construct step-level supervision, such as GiGPO \cite{feng2025groupingroup}, StepSearch \cite{wang2025stepsearch}, MT-GRPO \cite{zeng2025reinforcing}, and CriticSearch \cite{zhang2025criticsearch}. However, explicit step-level annotation scales poorly. Fine-grained credit assignment based on outcome-level rewards remains a more scalable and promising direction for long-horizon agentic learning.

\section{Methodology}

\subsection{Non-Monotonic Correctness}
\label{sec:observation}

We formulate multi-turn agentic reasoning as a finite-horizon Markov Decision Process where the model interacts with an environment over multiple steps. At each turn $t$, the agent observes the full interaction history $h_t = (a_1, o_1, \dots, a_{t-1}, o_{t-1})$, where $o_t$ denotes the environment observation (e.g., search results) following action $a_t$, and produces the next action $a_t$, a variable-length token sequence constituting a complete LLM response. The policy $\pi_\theta(a_t \mid h_t)$ is thus history-conditioned. An episode forms a trajectory $\tau = (a_1, o_1, \dots, a_T)$ and receives only a final outcome reward $R(\tau)$.

Most existing RL methods for agentic tasks assign this single outcome reward uniformly to all actions in the trajectory, implicitly treating each action as contributing equally, and in the same direction, toward the final result \cite{shao2024deepseekmath, jin2025search, ji2025tree}. More sophisticated approaches attempt finer-grained credit assignment by estimating intermediate state values through Monte Carlo sampling: for each intermediate history, multiple rollouts are generated and the resulting win rate is used as a value estimate~\cite{zhang2025process}. Despite their granularity, both families of methods share a common \textit{monotonicity assumption}: that a high-value intermediate state reliably leads to success, while a low-value state is unlikely to be  recovered, as if correctness accumulates predictably along the trajectory.

\begin{figure}[t]
    \centering
    \includegraphics[width=\linewidth]{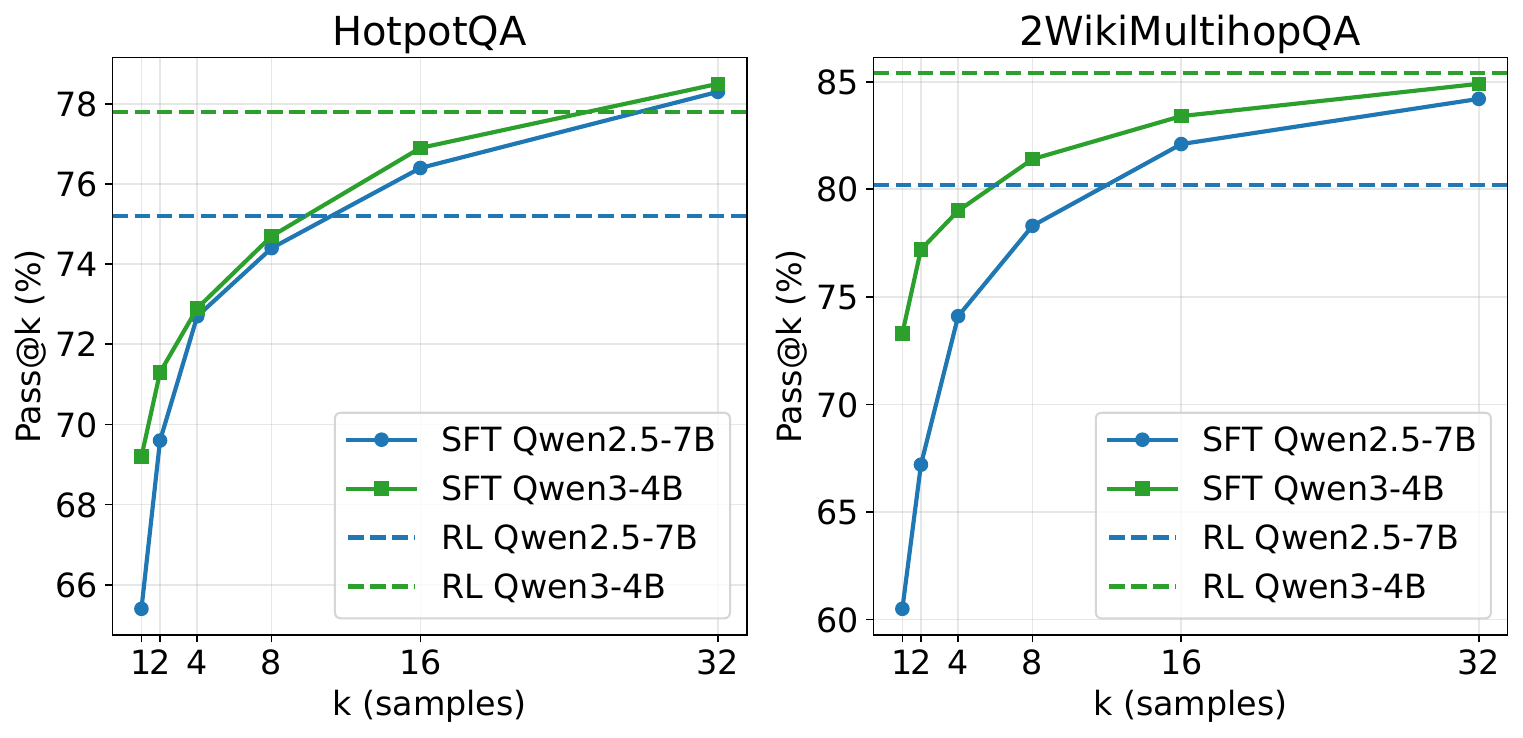}
    \caption{Pass@K results of SFT-trained Qwen2.5 and Qwen3 with partial resampling at the trajectory tail.}
    \label{fig:partial_pass_at_k}
      \vskip -0.2in
\end{figure}

However, multi-turn search trajectories violate this assumption. Following the ReAct paradigm~\cite{yao2022react}, a search agent alternates between issuing queries and synthesizing retrieved evidence. Unlike single-turn reasoning, multi-turn search continuously exposes the model to new observations, which may reshape its subsequent reasoning. As a result, trajectory correctness can be \textit{non-monotonic}: early mistakes may later be corrected through query reformulation, evidence reinterpretation, or continued exploration, while seemingly promising trajectories may still fail due to premature answering, hallucination, or faulty evidence aggregation.

To quantify this phenomenon, we conduct a partial resampling study on multi-hop QA benchmarks. Specifically, we evaluate 1K-example subsets of HotpotQA~\cite{yang2018hotpotqa} and 2WikiMultihopQA~\cite{ho-etal-2020-constructing}, using Qwen2.5-7B-Instruct~\cite{qwen2.5} and Qwen3-4B-Instruct~\cite{qwen3} models fine-tuned on search trajectories. For each example, we first sample a complete rollout, truncate it before the final answer generation step, and then resample 32 continuations from the same intermediate state. The model is allowed to either answer immediately or continue searching. Performance is evaluated using Pass@K with an LLM-as-a-Judge.

As shown in Figure~\ref{fig:partial_pass_at_k}, resampling from the same late-stage state improves Pass@32 by 10\%--25\% over Pass@1, indicating that different continuations from identical contexts can lead to divergent outcomes and that SFT-trained agents already possess a spontaneous ability to recover from imperfect states (see Appendix~\ref{app:tail_errors} for examples).

This observation exposes a key limitation of existing credit assignment strategies. In multi-turn search, the same history can lead to different outcomes depending on what the agent does next, making the history itself an insufficient unit of credit. Uniform outcome assignment is too coarse: in a failed trajectory, it penalizes all actions alike, including those that may have moved the agent toward the correct answer. Process rewards are more fine-grained, but they still assign scores to intermediate states rather than explicitly specifying which actions should be taken from them. Thus, neither approach explicitly supervises which behaviors under the current history lead to a correct final answer.

\subsection{Difficulty-Aware Branch Sampling}
\label{sec:branching_sampling}
Building on this observation, we shift the supervision target from evaluating how good a state is to comparing what continuations should be taken from that state. To obtain such localized signals, we introduce \emph{branch sampling}.

Given an initial trajectory, we truncate it at a late-stage state and resample a suffix continuation from the truncated point. This converts credit assignment from evaluating a single complete trajectory into comparing alternative decisions under the same prefix. To ensure informative training signal, we retain only \emph{contrastive branches}: if the original rollout is incorrect, we search for a correct continuation, and vice versa. Branches whose outcomes match the original are discarded, so that each retained pair provides a meaningful positive–negative contrast.

A fixed branching strategy is inefficient across tasks of varying difficulty. BranPO therefore adopts a \emph{difficulty-aware branch sampling} strategy. After the initial rollout, group accuracy is computed and branching budgets are assigned based on task accuracy and trajectory reward. For correct trajectories on easy tasks, \emph{simple branching} is applied: a single truncation-and-resample attempt is made. For hard tasks or incorrect trajectories, \emph{recursive branching} is used: starting from the last reasoning step, the method progressively backtracks the truncation point and resamples at each step, until either a contrastive branch is found or the branching budget is exhausted. As illustrated in Figure~\ref{fig:method}, this adaptive allocation concentrates computation on challenging instances where alternative continuations are most informative, while avoiding unnecessary exploration on trivial cases. The full algorithm is summarized in Appendix~\ref{sec:alg_detail}.

\subsection{Branching Advantage Estimation}
\label{sec:branpo}

We now formalize the proposed method, termed Branching Relative Policy Optimization (BranPO). BranPO builds on the agentic MDP defined in Section~\ref{sec:observation} and instantiates the branching strategy introduced above through a principled advantage estimation scheme.

Following GRPO, BranPO treats each task prompt $q$ as a group and samples $N$ trajectories $\{\tau^{(q,j)}\}_{j=1}^N$ from the current policy.
Branching is performed independently for each trajectory. For the $n$-th trajectory, we truncate it at a selected turn $k$ to obtain
a shared prefix
\begin{equation}
    \mathcal{B}_{q,n}
    \;=\;
    \tau^{(q,n)}_{\le k} = (a_1,o_1\dots,a_{k},o_k)
\end{equation}
which defines a state $s_{k+1}$ in the agentic MDP. From this state, we sample a set of branch continuations
\begin{equation}
    \mathcal{C}_{q,n}
    \;=\;
    \big\{ \tau^{(q,n,i)}_{>k} \big\}_{i=1}^{|\mathcal{C}_{q,n}|}
\end{equation}Each branch yields a complete trajectory with outcome reward $
    r_{\text{branch}}^{(q,n,i)} = R(\mathcal{B}_{q,n}, \mathcal{C}_{q,n}^{(i)})$.

When both successful and failed branches originate from the same prefix, credit assignment for the shared prefix becomes ambiguous. To mitigate this ambiguity and reduce gradient variance, we define the base trajectory reward for the prefix as the mean return of its branches:

\begin{equation}
    r_{\text{base}}^{(q,n)}
    \;=\;
    \frac{1}{|\mathcal{C}_{q,n}|}
    \sum_{i=1}^{|\mathcal{C}_{q,n}|}
    r_{\text{branch}}^{(q,n,i)}
\end{equation}

BranPO normalizes advantages following GRPO. The base advantage assigned to the shared prefix $\mathcal{B}_{q,n}$ is computed across trajectories sampled from the same prompt:
\begin{equation}
    \hat{A}_{\text{base}}^{(q,n)}
    =
    \frac{
        r_{\text{base}}^{(q,n)} - \mu_r^{\text{base}}(q)
    }{
        \sigma_r^{\text{base}}(q)
    }
\end{equation}
where $\mu_r^{\text{base}}(q)$ and $\sigma_r^{\text{base}}(q)$ is the mean and standard deviation of the group $\{r_{\text{base}}^{(q,j)} \mid 1 \le j \le N \}$.

For each branch continuation $\mathcal{C}_{q,n}^{(i)}$, the advantage is computed across all branches from the same prompt:
\begin{equation}
    \hat{A}_{\text{branch}}^{(q,n,i)}
    =
    \frac{
        r_{\text{branch}}^{(q,n,i)} - \mu_r^{\text{branch}}(q)
    }{
        \sigma_r^{\text{branch}}(q)
    }
\end{equation}
with $\mu_r^{\text{branch}}(q)$ and $\sigma_r^{\text{branch}}(q)$ is the mean and standard deviation of the group $\left\{ r_{\text{branch}}^{(q,j,l)} \mid 1 \le j \le N, \, 1 \le l \le |\mathcal{C}_{q,j}| \right\}$.

The resulting policy gradient decomposes credit assignment across the trajectory.  Specifically, the base advantage $\hat{A}_{\text{base}}^{(q,n)}$ is applied to all actions in the shared prefix $\mathcal{B}_{q,n}$ through the log probability $\log p_\theta(\mathcal{B}_{q,n})$. For each branch continuation, the corresponding branch-level advantage $\hat{A}_{\text{branch}}^{(q,n,i)}$ is applied to the actions in the suffix via the conditional log probability $\log p_\theta(\mathcal{C}^{(i)}_{q,n} \mid \mathcal{B}_{q,n})$.

\paragraph{Why BranPO is more effective?}
The structure of BranPO naturally unifies trajectory-level RL with contrastive preference optimization within a single framework. As derived in Appendix~\ref{sec:theory}, BranPO assigns credit differently to prefixes and suffixes.

For the shared prefix $\mathcal{B}$ (the search history), it functions as a variance-reduced variant of GRPO. By averaging returns across branched continuations, it stabilizes the learning signal for early steps that face inherently ambiguous credit assignment (Appendix~\ref{app:variance}).

For the branched suffixes, BranPO induces a contrastive gradient structure analogous to preference optimization. By contrasting a correct continuation $\mathcal{C}^+$ against an incorrect one $\mathcal{C}^-$ from the same prefix, the suffix gradient takes the form:
\begin{equation}
\small
    \nabla_\theta \mathcal{L}_{\text{suffix}} 
    \;\propto\; 
    \underbrace{\nabla_\theta \log \pi_\theta(\mathcal{C}^+ \mid \mathcal{B}) - \nabla_\theta \log \pi_\theta(\mathcal{C}^- \mid \mathcal{B})}_{\text{Contrastive preference optimization}}
\normalsize
\end{equation}
This gradient explicitly increases the likelihood gap between successful and failed continuations conditioned on the same history, sharing the core mechanism of DPO methods~\cite{rafailov2023direct} that optimize pairwise preferences. The key distinction is that BranPO derives this contrastive signal online from the agent's own branched rollouts rather than from a fixed preference dataset, and applies it selectively to the suffix where divergence actually occurs. This hybrid structure allows BranPO to maintain robust optimization for early-stage retrieval actions through group-wise trajectory comparison, while leveraging the sample efficiency of contrastive supervision for late-stage synthesis decisions where credit can be precisely localized.

\begin{table*}[ht]
\centering
\resizebox{0.9\textwidth}{!}{
\begin{tabular}{lcccccccccc}
\toprule
 
& \multicolumn{2}{c}{2WikiMQA} 
& \multicolumn{2}{c}{HotpotQA} 
& \multicolumn{2}{c}{MuSiQue} 
& \multicolumn{2}{c}{Bamboogle} 
& \multicolumn{2}{c}{Average} \\
\cmidrule(lr){2-3} \cmidrule(lr){4-5} \cmidrule(lr){6-7} \cmidrule(lr){8-9} \cmidrule(lr){10-11}
Method & EM & F1 & EM & F1 & EM & F1 & EM & F1 & EM & F1 \\
\midrule
\multicolumn{11}{c}{\texttt{\textbf{Outcome Reward}}} \\
\midrule
Search-R1$^\dagger$ \cite{jin2025search}  & 31.2 & 37.6 & 39.4 & 50.2 & 18.1 & 26.2 & 38.4 & 50.1 & 31.8 & 41.0 \\

SE-Search$^\dagger$ \cite{li2026se} & 35.5 & 42.2 & 43.8 & 55.9 & 19.5 & 29.0 & 48.8 & 60.1 & 36.9 & 46.8 \\
\midrule
\multicolumn{11}{c}{\texttt{\textbf{Process Reward}}} \\
\midrule
ReasonRAG$^\dagger$ \cite{zhang2025process}     & 43.6 & 50.4 & 38.4 & 48.9 & 12.8 & 20.6 & 36.0 & 45.5 & 32.7 & 41.3 \\
StepSearch$^\dagger$ \cite{wang2025stepsearch}  & 36.6 & 43.1 & 38.6 & 50.2 & \textbf{22.6} & 31.2 & 40.0 & 53.4 & 34.5 & 44.5 \\
GiGPO \cite{feng2025groupingroup} & 42.3 & 48.0 & 42.2 & 53.4 & 19.2 & 27.1 & 44.0 & 50.1 & 36.9 & 44.7 \\
CriticSearch$^\dagger$ \cite{zhang2025criticsearch} & 42.8 & 50.1 & 44.2 & 56.0 & 19.4 & 28.1 & 47.2 & 59.2 & 38.4 & 48.4 \\
TIPS$^\dagger$ \cite{xie2026tips}  & 43.0 & 50.6 & 43.0 & 54.7 & 17.1 & 26.6 & 36.8 & 52.2 & 35.0 & 46.0\\
\midrule
\multicolumn{11}{c}{\texttt{\textbf{Branch Sampling}}} \\
\midrule
Tree-GRPO \cite{ji2025tree} & 41.9 & 48.6 & 44.0 & 56.2 & 20.9 & 30.3 & 44.0 & 57.1 & 37.7 & 48.1\\
ARPO \cite{dong2025arpo} & 42.4 & 51.3 & 46.1 & 59.1 & 18.2 & 27.3 & 40.0 & 50.9 & 36.7 & 47.2 \\
AEPO \cite{dong2025aepo} & 46.7 & 53.5 & 45.7 & 58.2 & 19.3 & 28.4 & 44.0 & 54.6 & 38.9 & 48.7 \\
\rowcolor{green!20}
BranPO (Ours)& \textbf{48.1} & \textbf{56.7} & \textbf{48.9} & \textbf{61.8} & 21.2 & \textbf{32.4} & \textbf{50.4} & \textbf{62.0} & \textbf{42.2} & \textbf{53.2}\\
\bottomrule
\end{tabular}
}
\caption{ Performance comparison on multi-hop QA benchmarks. All methods except StepSearch are trained on Search-R1 training set, with HotpotQA as the in-domain test set. StepSearch is trained on the MuSiQue training set. $^\dagger$ means the results are obtained from the original paper. The best results are indicated with \textbf{bold}.}
\label{tab:main_results_7b}
\vskip -0.1in
\end{table*}

\begin{table*}[t]
\centering
\small
\centering
\resizebox{0.8\textwidth}{!}{%
\begin{tabular}{l 
                c c c c c c c c c c             
                }
\toprule

& \multicolumn{2}{c}{2WikiMQA}
& \multicolumn{2}{c}{HotpotQA}
& \multicolumn{2}{c}{MuSiQue}
& \multicolumn{2}{c}{Bamboogle}
& \multicolumn{2}{c}{Average}

 \\
\cmidrule(lr){2-3} \cmidrule(lr){4-5} \cmidrule(lr){6-7} \cmidrule(lr){8-9} \cmidrule(lr){10-11}

 Method & F1 & LasJ & F1 & LasJ & F1 & LasJ & F1 & LasJ & F1 & LasJ
 \\
\midrule
\multicolumn{11}{c}{\texttt{\textbf{Qwen2.5-7B-Instruct}}} \\
\midrule
GRPO & 64.8 & 73.6 & 60.4 & 71.9 & 34.6 & 42.0 & 51.3 & 60.8 & 52.8 & 62.1  \\
Tree-GRPO & 67.6 & 75.1 & 58.3 & 69.4 & 34.6 & 42.0 & 49.6 & 57.6 & 52.5 & 61.0 \\
ARPO & 63.3 & 74.3 & 58.6 & 68.9 & 31.7 & 40.0 & 51.8 & 59.2 & 51.4 & 60.6 \\
AEPO & 65.1 & 76.0 & 60.4 & 71.7 & 32.5 & 40.4 & \textbf{53.4} & \textbf{61.6} & 52.9 & 62.4  \\
\rowcolor{green!20}
BranPO & \textbf{72.0} & \textbf{80.2} & \textbf{62.7} & \textbf{75.2} & \textbf{35.1} & \textbf{43.8} & 52.7 & \textbf{61.6} & \textbf{55.6} & \textbf{65.2} \\
\midrule
\multicolumn{11}{c}{\texttt{\textbf{Qwen3-4B-Instruct-2507}}} \\
\midrule
GRPO & 71.9 & 82.6 & 61.4 & 74.5 & 37.3 & 44.9 & 58.3 & 70.4 & 57.2 & 68.1\\
Tree-GRPO & 68.7 & 78.2 & 58.4 & 71.2 & 32.0 & 41.0 & 54.6 & 65.6 & 53.4 & 64.0 \\
ARPO & 67.0 & 77.0 & 57.6 & 73.4 & 31.9 & 39.8 & 51.8 & 64.0 & 52.1 & 63.6  \\
AEPO & 69.5 & 78.9 & 60.6 & 72.0 & 32.5 & 41.2 & 56.8 & 68.0 & 54.9 & 65.0 \\
\rowcolor{green!20}
BranPO   & \textbf{75.6} & \textbf{85.4} & \textbf{64.5} & \textbf{77.8} & \textbf{38.9} & \textbf{48.7} & \textbf{60.3} & \textbf{72.0} & \textbf{59.8} & \textbf{71.0}  \\

\bottomrule
\end{tabular}%
}
\caption{Performance on multi-hop QA tasks when training on ASearcher dataset, with F1 and LLM-as-a-Judge (LasJ) score on each dataset and average. The best results are indicated with \textbf{bold}.}
\label{tab:main_result}
\vskip -0.1in
\end{table*}

\subsection{Redundant Step Masking}

Redundant tool usage is a common failure mode in multi-turn search agents \cite{team2025mirothinker}. Agents may continue to invoke the search tool even after sufficient evidence has been obtained, resulting in redundant actions with little additional benefit. An example is shown in Appendix~\ref{app:redundant}. Under the branch sampling strategy introduced in Section~\ref{sec:branching_sampling}, this issue can be further amplified. When truncation occurs near the trajectory tail, continuations that keep searching may be preferentially sampled if intermediate evidence appears incomplete, biasing training toward excessive verification.

While additional search steps can be reasonable for difficult instances, uncontrolled redundancy leads to inefficient training and suboptimal policies. To address this issue, we propose \emph{Redundant Step Masking (RSM)}, which augments BranPO with a dedicated redundancy-detection mechanism.
For correct trajectories whose interaction length exceeds the average length of correct trajectories within the same group, instead of applying the default BranPO branching strategy, we explicitly branch from the penultimate model response. If an alternative continuation reaches a correct answer with fewer steps, the original extra step is identified as redundant.

During optimization, the advantages associated with these redundant actions are masked and excluded from gradient computation, while the shorter correct continuation is retained as a valid training branch. Specifically, for branches identified as containing redundant steps, we apply a mask to the branch advantage:
\begin{equation}
    \hat{A}_{\text{branch}} = \hat{A}_{\text{branch}} \cdot \mathrm{Mask}_{\text{RS}}
\end{equation}
where $\mathrm{Mask}_{\text{RS}} \in \{0,1\}$ zeros out the advantages corresponding to redundant actions. By decoupling correctness from efficiency at the step level, RSM prevents reinforcement of unnecessary tool usage while preserving informative contrastive learning signals, which is beneficial for stable optimization in long-horizon agentic settings.

\section{Experiments}

\paragraph{Datasets.}
We train BranPO on two widely used search datasets: Search-R1 \cite{jin2025search} and ASearcher \cite{gao2025beyond} . Search-R1 training set mainly consists of 1--2 hop search questions from NQ \cite{kwiatkowski2019natural} and HotpotQA \cite{yang2018hotpotqa}. ASearcher contains more challenging multi-hop search tasks, allowing us to assess the performance of our algorithm in longer-horizon scenarios. We filter out invalid questions from the original ASearcher dataset. For each protocol, we keep the corresponding search engine and corpus consistent with the original work. 

\paragraph{Training.}
For experiments following Search-R1, we keep the training configuration as close as possible to the original Search-R1 setup. For the ASearcher benchmark, we adopt a two-stage training strategy following AgentGym-RL \cite{xi2025agentgym}: we first limit the interaction horizon to 4 steps and then extend it to 8 steps. Given our computational budget, this curriculum is sufficient to solve the tasks efficiently while stabilizing long-horizon RL training. We use a fully on-policy training setup rather than the asynchronous GRPO used in ASearcher, which allows us to isolate the effect of different sampling and credit-assignment strategies.

 For training stability, we warm-start all RL-based methods from the same SFT-initialized model; details of the SFT procedure are provided in the Appendix~\ref{sec:cold_start}. Full RL training details are provided in Appendix~\ref{sec:rl_details}, and algorithm settings are described in Appendix~\ref{sec:alg_detail}. As shown in Appendix~\ref{sec:training_efficiency}, BranPO achieves comparable wall-clock training time to other methods, ensuring fair comparison and comparable training cost.

\paragraph{Evaluation.}
We evaluate on four multi-hop QA benchmarks: HotpotQA \cite{yang2018hotpotqa}, 2WikiMultihopQA \cite{ho-etal-2020-constructing}, MuSiQue \cite{trivedi2022musique}, and Bamboogle \cite{press2023measuring}. Following the corresponding protocols, we report F1 score and an LLM-as-a-Judge (LasJ) metric for ASearcher, and Exact Match (EM) and F1 for Search-R1. We additionally evaluate web search ability on text-only questions from GAIA \cite{mialon2023gaia} in Appendix~\ref{app:web_eval}. Detailed evaluation protocols are provided in Appendix~\ref{sec:evaluation_details}.

\subsection{Main Results}
\label{sec:main_result}
We first evaluate BranPO under the Search-R1 setting with Qwen2.5-7B-Instruct~\cite{qwen2.5} as the backbone, comparing against outcome-reward RL methods, explicit process-reward RL methods, and prior branch-sampling methods. As shown in Table~\ref{tab:main_results_7b}, BranPO achieves the best average performance, reaching 42.2\% EM and 53.2\% F1, outperforming the strongest prior method by 3.3\% EM and 4.5\% F1 on average. These results suggest that branch-level preference optimization provides more effective credit assignment than previous methods, while avoiding the need for manually designed per-turn reward.

\paragraph{Results under the ASearcher setting.}
We further evaluate BranPO on the more challenging ASearcher dataset using Qwen2.5-7B-Instruct~\cite{qwen2.5} and Qwen3-4B-Instruct~\cite{qwen3}. In addition to GRPO, we focus on branch-sampling baselines, because these methods show strong performance in the shorter-horizon Search-R1 setting. During evaluation, we increase the maximum number of allowed search actions to 16 to assess longer-horizon behavior. The results are summarized in Table~\ref{tab:main_result}.

Across datasets and backbones, BranPO achieves the best F1 and LasJ scores, outperforming both GRPO and branch-sampling-based variants. While prior branch sampling methods do not consistently surpass GRPO on ASearcher, BranPO does, indicating that its gains come from branch-level preference optimization rather than branch sampling alone. This suggests that BranPO yields more stable credit assignment across extended decision sequences, enabling the model to sustain effective exploration under imperfect intermediate states in complex multi-hop search tasks.

\begin{figure*}[t]
    \centering
    \includegraphics[width=\textwidth]{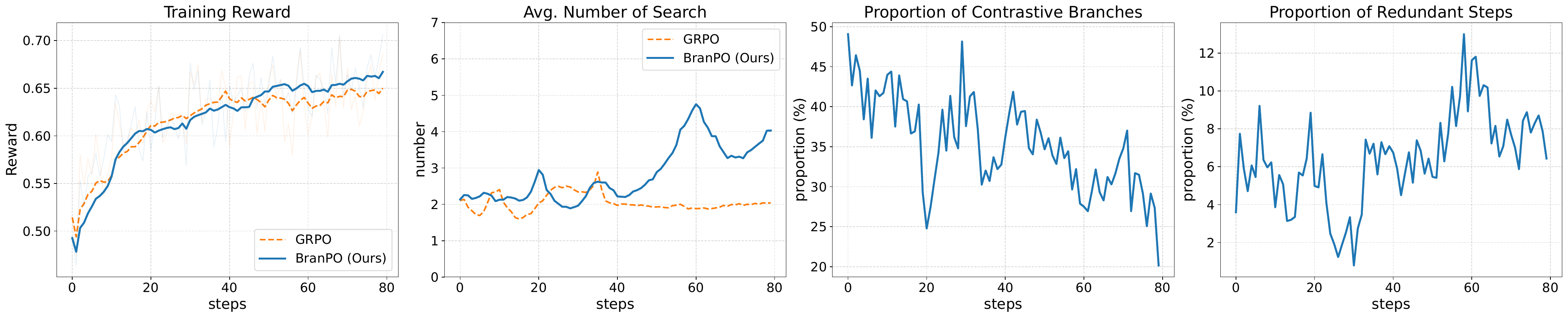}
    \caption{Training dynamics on Qwen2.5-7B-Instruct, including the training rewards, the change in the average number of search steps per trajectory, the proportion of trajectories that contain branches with different rewards, and the proportion of trajectories that are detected to include redundant steps under BranPO.}
    \label{fig:training}
      \vskip -0.1in
\end{figure*}

\paragraph{Training dynamics.} The two-stage training dynamics of Qwen2.5-7B-Instruct under ASearcher setting are illustrated in the Figure~\ref{fig:training}. It tracks the progression of training rewards, the evolution of the average number of search steps in training rollouts, the changing proportions of trajectories containing branches with different reward outcomes, and the fraction of trajectories that BranPO identifies as exhibiting redundant steps during training. As shown by the average number of search steps, BranPO learns faster than GRPO to increase search usage to ensure higher answer accuracy. Meanwhile, the proportion of trajectories with contrastive reward branches steadily decreases, indicating that late-stage errors are progressively reduced and that BranPO effectively optimizes tail decisions. As the interaction budget is extended, the proportion of trajectories containing redundant steps first increases and then declines, suggesting that RSM suppresses unnecessary exploratory behaviors. Nevertheless, trajectories with contrastive reward branches remain accounting for more than 20\% of all trajectories, indicating that substantial learning signals still remain for further optimization.

\subsection{Generalization to Extremely Long-Horizon Tasks}

When the interaction horizon becomes extremely long (e.g., 32+ steps), directly applying BranPO introduces considerable computational overhead due to recursive truncation and backtracking from the final step. To improve scalability, we introduce a segment-level variant of BranPO.

Given a trajectory with $M$ interaction steps, we evenly divide it into $S$ segments, each containing $M/S$ steps. BranPO then performs backtracking and branching at the segment level instead of every step, substantially reducing computational cost while preserving effective contrastive learning signals at key decision points.

We evaluate this variant on ALFWorld \cite{shridharalfworld}, a long-horizon embodied task benchmark in a text-based environment, following the GiGPO \cite{feng2025groupingroup} setting. We use Qwen2.5-7B-Instruct with a maximum interaction horizon of 50, $S=4$ segments, and a maximum recursive depth of 3. Results are shown in Table~\ref{tab:alfworld_result}. BranPO outperforms the strong baseline GiGPO by an average of 2.8 percentage points and achieves better performance than both GRPO and GiGPO on most subtasks, demonstrating strong generalization to other long-horizon domains.

\begin{table}[t]
\centering
\resizebox{\linewidth}{!}{
\begin{tabular}{lccccccc}
\toprule
Method & Pick & Look & Clean & Heat & Cool & Pick2 & All \\
\midrule
GRPO   & 90.8 & 66.1 & 89.3 & 74.7 & 72.5 & 64.7 & 77.6 \\
GiGPO  & 97.7 & 82.7 & \textbf{98.8} & 83.7 & \textbf{89.3} & 79.2 & 90.8 \\
BranPO & \textbf{99.0} & \textbf{89.7} & \textbf{98.8} & \textbf{95.9} & 88.0 & \textbf{86.1} & \textbf{93.6} \\
\bottomrule
\end{tabular}
}
\caption{Performance on ALFWorld benchmark.}
\label{tab:alfworld_result}
\end{table}

\begin{table}[t]
\centering
      \resizebox{\linewidth}{!}{
\begin{tabular}{lcccc}
\toprule
Method & 2WikiMQA & HotpotQA & MuSiQue & Avg. \\
\midrule
BranPO        & 69.4 & 59.8 & 34.5 & 54.6 \\
\quad w/o DBS & 67.8 & 59.4 & 30.8 & 52.7 \\
\quad w/o RSM & 68.2 & 60.0 & 33.0 & 53.7 \\
\bottomrule
\end{tabular}
}

\caption{Ablation study of BranPO components. DBS denotes for difficulty-aware branch sampling, RSM denotes for redundant step masking. The evaluation metric is F1 score.}
\label{tab:ablation}
\vskip -0.1in
\end{table}

\subsection{Ablation Study}
We perform ablation studies on Qwen3-4B-Instruct to assess the impact of Difficulty-aware Branch Sampling (DBS) and Redundant Step Masking (RSM) in BranPO. After the first-stage training, models are evaluated on three multi-hop QA benchmarks, with results reported in Table~\ref{tab:ablation}. Removing either DBS or RSM leads to performance degradation, demonstrating that both components are essential. DBS improves sampling efficiency by emphasizing informative trajectories, while RSM enhances step-level supervision by filtering redundant reasoning steps. Further discussion of RSM in extended horizon training is included in Appendix~\ref{app:rsm_analysis}.

\paragraph{Hyperparameter Sensitivity}
BranPO introduces two hyperparameters: the maximum number of branch samples per step ($B$) and the maximum number of backtracking steps ($R$). We study their effects under different settings, with detailed results provided in Appendix~\ref{app:hyperparam}. Even with a lightweight configuration ($B=4, R=1$), BranPO still consistently outperforms GRPO, while larger values further improve performance, indicating strong robustness and favorable scaling behavior.  

\section{Conclusion}

In this work, we present Branching Relative Policy Optimization (BranPO), an RL framework for long-horizon agentic tasks that addresses the credit assignment challenges posed by sparse rewards and the non-monotonic nature of correctness. BranPO constructs contrastive branches through tail-focused resampling and optimizes them with hierarchical group-relative advantages, transforming sparse outcome signals into localized preference supervision without step-level rewards or expensive tree expansion. Difficulty-aware branch sampling and Redundant Step Masking further improve efficiency and suppress continuation bias. Experiments show that BranPO achieves consistent gains over outcome-reward RL methods, explicit process-reward RL methods, and prior branch sampling methods on multi-hop QA, with generalization to 30+ step agentic tasks. Beyond empirical gains, this work highlights a practical insight: precise credit assignment is not essential; concentrating learning on outcome-divergent decisions under shared histories suffices for robust recovery from imperfect states.

\section*{Limitations}
Although BranPO provides more efficient updates than trajectory-level baselines, it does not fully resolve the sparse reward challenge. In extremely long-horizon tasks, generating rollouts to obtain outcome signals remains costly, and tail-focused resampling only partially alleviates this overhead. Integrating lightweight intermediate reward models or strategies that guide higher-quality rollouts to further reduce sampling cost is a promising direction for future work.

\bibliography{custom}

\appendix

\section{Implement Details}
\begin{algorithm*}[htbp]
\caption{BranPO update step with Difficulty-aware Branch Sampling}
\label{alg:branpo}
\begin{algorithmic}

\Require Policy $\pi_\theta$, reward $R(\cdot)$, initial rollout number $N$,
branch schedule $\textsc{Bran}(\cdot)$,
recursion schedule $\textsc{Recur}(\cdot)$

\Function{\textsc{InitialRollout}}{$g, N$}
        \State Sample trajectories $\{\tau^{(g)}_i\}_{i=1}^N \sim \pi_\theta$
        \State Calculate group accuracy $\text{acc}^{(g)}$ based on $\{\tau^{(g)}_i\}_{i=1}^N$
        \State \textbf{return} $(\{\tau^{(g)}_i\}_{i=1}^N, \text{acc}^{(g)})$
    \EndFunction
\newline
\Function{Branching}{$\tau^{(g)}_i, \text{acc}^{(g)}$}
        \For{$d = 1$ to $\textsc{Recur}(R(\tau_i^{(g)}), \text{acc}^{(g)})$}
            \State Select truncation index $k_d$ near tail $\mathcal{B}_{g,i} \gets \tau^{(g)}_{i, \le k_d}$
            \State $\mathcal{C}_{g,i} \gets (\tau^{(g)}_{i, > k_d}, R(\tau_i^{(g)}))$
            \For{$j = 1$ to $\textsc{Bran}(R(\tau_i^{(g)}), \text{acc}^{(g)})$}
                \State Sample $\tau_{>k_d}^{(g,i,j)} \sim \pi_\theta(\cdot \mid \mathcal{B}_{g,i})$
                \State $\tilde{\tau}^{(g,i,j)} \gets (\mathcal{B}_{g,i}, \tau_{>k_d}^{(g,i,j)})$
                \If{$R(\tilde{\tau}^{(g,i,j)}) \ne R(\tilde{\tau}^{(g,i)})$}
                    \State Add $(\tilde{\tau}^{(g,i,j)}, R(\tilde{\tau}^{(g,i,j)}))$ to $\mathcal{C}_{g,i}$
                    \State \textbf{return} $\mathcal{C}_{g,i}$
                \EndIf
            \EndFor
        \EndFor
        \State \textbf{return} $\mathcal{C}_{g,i}$
    \EndFunction
    
    \For{task $g$}
        \State $(\{\tau^{(g)}_i\}_{i=1}^N, \text{acc}^{(g)}) \gets \textsc{InitialRollout}(g, N)$

        \For{$i = 1$ to $N$}
            \State $\mathcal{C}_{g,i} \gets \textsc{Branching}(\tau^{(g)}_i, \text{acc}^{(g)})$

            \State $r_{\text{base}}^{(g,i)} \gets \dfrac{1}{|\mathcal{C}_{g,i}|} \sum_{(\tau, r) \in \mathcal{C}_{g,i}} r$

            \State $\hat{A}_{\text{base}}^{(g,i)} \gets \dfrac{r_{\text{base}}^{(g,i)} - \mu_r^{\text{base}}}{\sigma_r^{\text{base}}}$

            \For{branch $j \in \mathcal{C}_{g,i}$}
                \State $\hat{A}_{\text{branch}}^{(g,i,j)} \gets \dfrac{r_{\text{branch}}^{(g,i,j)} - \mu_r^{\text{branch}}(g,i)}{\sigma_r^{\text{branch}}(g,i)}$
            \EndFor

        \EndFor
    \EndFor

    \State Update $\theta$ using GRPO-style policy gradient with BranPO advantages
\end{algorithmic}

\end{algorithm*}

\subsection{Algorithm Details}
\label{sec:alg_detail}
The pseudocode for BranPO with Difficulty-aware Sampling is presented in Algorithm \ref{alg:branpo}. An F1 score of 0.8 is used as a threshold to determine whether an answer is considered correct during contrastive sampling. And task accuracy is calculated by the average F1 score of the initial rollouts in the same task group. In our experiments, the branch and recursion schedule criteria were selected to maximize the difference signals within the branches without increasing the total step time. Consequently, our branch sampling criteria are defined as follows:

\begin{equation}
    \text{Bran}(r, \text{acc}) = 
    \begin{cases} 
      1, & \text{if } \text{acc} = 1 \\
      2, & \text{otherwise}
    \end{cases}
\end{equation}

\begin{equation}
    \text{Recur}(r, \text{acc}) = 
    \begin{cases} 
      3, & \text{if } \text{acc} < 0.5 \text{ and } r < 0.8 \\
      2, & \text{if } \text{acc} < 1 \text{ or } r < 0.8 \\
      1, & \text{otherwise}
    \end{cases}
\end{equation}

which means:
\begin{itemize}
    \item If the task accuracy is equal to 1 (all correct), only the final step is truncated and one branch is sampled.
    \item If the task accuracy is below 0.5 and the trajectory-level F1 score is below 0.8, the trajectory is recursively truncated by moving backward from the last step to the third-to-last step, with two branches sampled at each truncation, until a correct branch is found.
    \item Otherwise, the trajectory is recursively truncated from the last step to the second-to-last step, with two branches sampled at each truncation, until a contrastive branch is found.
\end{itemize}
The selection of the branch and recursion budget hyperparameters above is primarily intended to ensure fairness in training time between the BranPO algorithm and other methods. We also experimented with additional hyperparameter settings to study the sensitivity of BranPO. Detailed results are provided in ~\ref{app:hyperparam}. The results show that even under lower training budgets, BranPO can still outperform other methods.

Regarding Redundant Mask detection, for samples where accuracy $> 0.5$ and $r=1$, we execute a truncation at the penultimate step. Subsequently, we branch 5 times until a more efficient step is identified.

\begin{table*}[h]
\centering
\begin{tabular}{l|l}
\toprule
Hyperparameter & Value \\
\midrule
Batch size & 256 \\
PPO mini-batch size & 256 \\
Max response length & 3,072 (4-turn limit), 7,168 (8-turn limit) \\
KL loss coefficient & 0.001 \\
KL type & k3 \\
Loss aggregate mode & token-mean \\
Rollout IS threshold & 2.0 \\
Rollout IS batch normalize & True \\
Rollout engine & vLLM \cite{kwon2023efficient} \\
Rollout temperature & 0.95 \\
Optimizer & AdamW \\
Learning rate & $5 \times 10^{-6}$ (ASearcher); $1\times10^{-6}$ (Search-R1) \\
Learning rate warm-up ratio & 0.0 \\
Adam $\beta_1$, $\beta_2$ & 0.9, 0.999 \\
Gradient clip norm & 1.0 \\
Training backend & FSDP \\
retrieval topk & 3 \\
\bottomrule
\end{tabular}
\caption{Training Hyperparameters.}
\label{tab:hyperparams}
\end{table*}

\subsection{Cold Start}
\label{sec:cold_start}
When conducting RL with Instruct models, we observed instability in the later stages of training. To mitigate this issue, we adopted a SFT to stabilize multi-turn training. The construction pipeline for the cold-start data is as follows. First, we perform zero RL training with GRPO using the Search-R1 \cite{jin2025search} training data, which consists of NQ \cite{kwiatkowski2019natural} and HotpotQA \cite{yang2018hotpotqa}, with the turn limit set to 4 and training for 80 iterations. Next, from the rollout data generated during training, we filter out the correct samples, randomly select 10,000 instances, and then reconstruct the tag format to obtain a multi-turn SFT dataset. For each model, the cold-start data are derived from its own zero RL training. Finally, we conduct SFT using LLaMA-Factory \cite{zheng2024llamafactory} with a learning rate of $1\times10^{-5}$, a batch size of 256, and train for 1 epoch.

\subsection{RL Training}
\label{sec:rl_details}
Our training framework is based on rLLM \cite{rllm2025}, an agentic RL training system built on veRL \cite{sheng2025hybridflow}. We introduce two fixes to the original rLLM framework:

\begin{itemize}
    \item \textbf{Token-in-token-out.}  
    In multi-turn training, each round's prompt includes the interaction history from previous rounds. In the original rLLM implementation, the prompt is re-encoded by the tokenizer at every turn. We observe that the re-tokenized sequence may not exactly match the tokens previously generated by the LLM, introducing off-policy effects that harm training stability. To address this issue, we cache the token IDs produced by the LLM at each turn and concatenate them with the existing prompt tokens and encoded environment tokens to form the next-turn prompt, avoiding repeated re-encoding and decoding.
    
    \item \textbf{Rollout Correction.}  
    This technique mitigates off-policy effects caused by distributional discrepancies between the inference engine and the training engine. We apply truncated token-level importance sampling to reduce this mismatch \cite{liu-li-2025-rl-collapse}.
\end{itemize}

During training, we use the F1 score as the primary reward. We also incorporate a format reward: if the output does not follow either
{\hypersetup{urlcolor=black}\url{<thinking>...</thinking><search>...</search>}} or
{\hypersetup{urlcolor=black}\url{<thinking>...</thinking><answer>...</answer>}}, or the model answer the question without one search, a reward of $-0.5$ is assigned to the entire trajectory.

The prompt template used in our experiments follows \cite{jin2025search} and is shown in Table~\ref{tab:template}. We replace \texttt{<think>} and \texttt{</think>} with \texttt{<thinking>} and \texttt{</thinking>} since \texttt{<think>} is a special token in Qwen3 series models. All other training hyperparameters are reported in Table~\ref{tab:hyperparams}.

\begin{table}[h]
\centering
\begin{tabularx}{\linewidth}{X}
\toprule
Answer the given question. You must conduct reasoning inside \think{and} first every time you get new information. After reasoning, if you find you lack some knowledge, you can call a search engine by \search{query} and it will return the top searched results between \info{and}. You can search as many times as your want. If you find no further external knowledge needed, you can directly provide the answer inside \answer{and}, without detailed illustrations. For example, \answer{Beijing}. Question: \textcolor{red}{question.}
 \\
\bottomrule
\end{tabularx}
\caption{Prompt Template.}
\label{tab:template}
\end{table}

\subsection{Evaluation Details}
\label{sec:evaluation_details}

\paragraph{Baselines}
We mainly compare BranPO with three categories of baseline reinforcement learning algorithms:

\begin{enumerate}
    \item \textbf{Outcome-reward-based RL methods}, including GRPO \cite{shao2024deepseekmath}, Search-R1 \cite{jin2025search}, and SE-Search \cite{li2026se}.
    
    \item \textbf{Explicit turn-level process reward methods}, including ReasonRAG \cite{zhang2025process}, StepSearch \cite{wang2025stepsearch}, GiGPO \cite{feng2025groupingroup}, CriticSearch \cite{zhang2025criticsearch}, and TIPS \cite{xie2026tips}.
    
    \item \textbf{Branch-sampling-based GRPO methods}, including Tree-GRPO \cite{ji2025tree} and entropy-guided branching strategies ARPO \cite{dong2025arpo} and AEPO \cite{dong2025aepo}.
\end{enumerate}

These three categories cover the vast majority of algorithmic improvements in agentic RL. Due to computational resource constraints, under the ASearcher setting, we only compare against GRPO and branch-sampling-based GRPO methods, as these methods generally exhibit stronger performance in short-horizon scenarios. 

Our training data follows ASearcher \cite{gao2025beyond}; however, ASearcher still adopts the GRPO algorithm, with the number of interaction rounds extended to 32. Nevertheless, in the ASearcher dataset, the vast majority of problems can already be solved within 8 interaction steps. We therefore believe that extending the interaction horizon to 32 steps on this dataset is unnecessary, as it mainly introduces longer training time and may encourage redundant search behaviors.
Therefore, we do not compare directly with ASearcher, but instead compare with the results of GRPO under our training budget.
    

    

\paragraph{Evaluation Data}
We adopt the exact same evaluation data and sampling protocol as in ASearcher \cite{gao2025beyond}. Specifically, our evaluation suite includes:
\begin{itemize}
    \item \textbf{Multi-hop QA datasets}: 2WikiMultihopQA \cite{ho-etal-2020-constructing} (12,576 examples), HotpotQA \cite{yang2018hotpotqa} (7,405 examples), MuSiQue \cite{trivedi2022musique} (2,417 examples), and Bamboogle \cite{press2023measuring} (125 examples);

    \item \textbf{Web QA dataset}: GAIA \cite{mialon2023gaia} (103 text-only instances).
\end{itemize}
Under the ASearcher setting, for all datasets except Bamboogle and GAIA, we only use the randomly sampled 1,000 examples adopted in ASearcher \cite{gao2025beyond}. 

\paragraph{Evaluation Settings}
During evaluation, we use the following generation parameters: temperature = 0.7 and top\_p = 0.8.

\begin{table}[h]
\centering 
\resizebox{\linewidth}{!}{
\begin{tabular}{lcccc}
\toprule
 & \multicolumn{3}{c}{Time Composition (s)} & Total (s) \\
\cmidrule(lr){2-4}
Method 
& Rollout 
& \begin{tabular}[c]{@{}c@{}}Compute\\ log\_prob\end{tabular} 
& \begin{tabular}[c]{@{}c@{}}Update\\ actor\end{tabular} 
&  \\
\midrule
GRPO   & 108 & 75 & 251 & 439 \\
Tree-GRPO   & 103 & 71 & 248 & 424 \\
BranPO & 191 & 55 & 182 & 429 \\
\bottomrule
\end{tabular}
}
\caption{Wall-clock time analysis of GRPO and BranPO. We report the average time over the first five training steps.}
\label{tab:time_analysis}
\end{table}

\subsection{Training Efficiency}
\label{sec:training_efficiency}

To ensure a fair comparison between GRPO, Tree-GRPO and BranPO, we analyze their wall-clock training time under a single training round. We report the average time of the first five training steps of Qwen2.5-7B-Instruct at Table~\ref{tab:time_analysis}, since later-stage training time is strongly affected by the average number of environment interactions per trajectory. 

As shown in Table~\ref{tab:time_analysis}, compared to GRPO and Tree-GRPO, BranPO introduces additional rollout overhead due to branching-based trajectory construction. However, this increased rollout cost is largely offset by a reduced policy update time, as BranPO requires fewer effective training samples for optimization. As a result, the overall wall-clock training time of BranPO remains comparable to that of GRPO and Tree-GRPO.

For ARPO and AEPO, we adopt their original rollout configurations to fully exploit the benefits of their entropy-guided branching strategies: the initial rollout size is set to 8, and the final number of trajectories is expanded to 16. In contrast, the rollout setting of GRPO in Table~\ref{tab:time_analysis} uses only 8 trajectories. Therefore, although ARPO and AEPO incur substantially higher computational costs than GRPO, Tree-GRPO, and BranPO, their performance still remains inferior to our method.

\begin{table*}[htbp]
\centering

\begin{tabular}{lcccc}
\toprule
& 2WikiMQA & HotpotQA & MuSiQue & Avg. \\
\midrule
GRPO & 62.4 & 58.6 & 31.1 & 50.7 \\
Tree-GRPO & 63.9 & 56.7 & 31.9 & 50.8 \\

BranPO & & & & \\
\quad No branch ($B$=0,$R$=1) & 58.6 & 56.8 & 30.1 & 48.5 \\
\quad $B$=2, $R$=1 & 61.9 & 58.3 & 31.7 & 50.6 \\
\quad $B$=4, $R$=1 & 65.9 & 59.7 & \textbf{33.7} & 53.1 \\
\quad $B$=2, $R$=2 & 64.2 & 58.8 & 31.5 & 51.5 \\
\quad $B$=4, $R$=2 & \textbf{68.0} & 60.7 & 32.5 & 53.7 \\
\quad $B$=2, $R$=3 & 67.7 & 60.9 & 32.2 & 53.6 \\
\quad $B$=4, $R$=3 & 67.9 & \textbf{61.1} & 32.5 & \textbf{53.8} \\

\bottomrule
\end{tabular}
\caption{Performance under different branching and recursion budgets.}
\label{tab:hyperparam}
\end{table*}

\section{Generalize to web scenario.} 
\label{app:web_eval}

\begin{table*}[htbp]
\centering
\resizebox{0.6\textwidth}{!}{%
\begin{tabular}{l 
                c c c c c c c c c      
                }
\toprule


& \multicolumn{2}{c}{Lv.1}
& \multicolumn{2}{c}{Lv.2}
& \multicolumn{2}{c}{Lv.3}
& \multicolumn{2}{c}{All} \\
\cmidrule(lr){2-3} \cmidrule(lr){4-5} \cmidrule(lr){6-7} \cmidrule(lr){8-9}
 Method  & F1 & LasJ & F1 & LasJ & F1 & LasJ & F1 & LasJ \\
\midrule
\multicolumn{9}{c}{\texttt{\textbf{Qwen2.5-7B-Instruct}}} \\
\midrule
GRPO  & 38.2 & 38.5 & 19.3 & 22.2 & \textbf{6.0} & 0.0 & 24.9 & 25.8 \\
BranPO & \textbf{41.2} & \textbf{41.1} & \textbf{26.9} & \textbf{28.4}  & 3.6 & 0.0 & \textbf{29.6} & \textbf{29.9} \\
\midrule
\multicolumn{9}{c}{\texttt{\textbf{Qwen3-4B-Instruct-2507}}} \\
\midrule
GRPO  & 53.5 & 57.1 & 36.8 & 38.5 & \textbf{6.1} & \textbf{2.1} & 39.5 & 41.3 \\
BranPO   & \textbf{57.9} & \textbf{61.6} & \textbf{36.9} & \textbf{40.9} & 4.8 & \textbf{2.1} & \textbf{41.1} & \textbf{44.2} \\
\bottomrule
\end{tabular}
}
\caption{Avg@4 performance on GAIA with real web search engine. The best results are indicated with \textbf{bold}.}
\label{tab:web_result}
\end{table*}

We further evaluate BranPO in a real-world web search scenario to assess its generalization ability, with results reported in Table~\ref{tab:web_result}. We evaluate our methods on the GAIA benchmark \cite{mialon2023gaia}, reporting the Avg@4 metric which averages the F1 scores and LLM-as-a-Judge scores across four independent runs. During evaluation, each search retrieves up to 10 web pages, which are then processed by a summarization model instantiated with Qwen3-30B-A3B-Instruct.

On the GAIA benchmark, BranPO consistently outperforms GRPO on both Qwen2.5-7B-Instruct and Qwen3-4B-Instruct, demonstrating strong generalization to real-world web search tasks. 

For LLM-as-a-Judge, the prompt we adopt follows ASearcher \cite{gao2025beyond}, as shown in Table~\ref{tab:lasj_prompt}.

\begin{table*}[htbp]
\centering
\begin{tabularx}{\linewidth}{X}
\toprule
You are an evaluation assistant. Please determine if the predicted answer is equivalent to any of the labeled answers.
\\ \\
Question: \textcolor{red}{question}
\\ \\
Labeled Answers: \textcolor{blue}{gt\_answers\_str}
\\ \\
Predicted Answer: \textcolor{blue}{pred\_answer}
\\ \\
Did the model give an answer \textbf{equivalent} to any of the labeled answers? Respond with "Correct" if equivalent to at least one, otherwise "Incorrect".
\\ \\
Output in JSON format:
\\ \\
```json
\\
\{
\\
    "rationale": "your rationale",
\\
    "judgement": "Correct" or "Incorrect"
\\
\}
\\
```
 \\
\bottomrule
\end{tabularx}
\caption{LLM-as-a-Judge Prompt Template.}
\label{tab:lasj_prompt}
\end{table*}

To provide the model with cleaner and more relevant information during web search tasks, we employ a summarization step after retrieving web pages. Specifically, we use a dedicated summarization model to extract query-relevant content from each retrieved webpage. Across all evaluations, we use \texttt{Qwen3-30B-A3B-Instruct} as the summarization model, with the prompt template shown in Table~\ref{tab:summarization_prompt}.

\begin{table*}[htbp]
\centering
\begin{tabularx}{\linewidth}{X}
\toprule
You are an AI research assistant. Your task is to analyze the provided webpage content and extract information relevant to the user's query.
\\ \\
Search Query: \textcolor{red}{search\_query} 
\\\\
Webpage Content: \textcolor{blue}{page\_content}
\\\\
Please analyze the content and return a \textbf{valid JSON object} with the following fields:
\\
- "is\_helpful": (boolean) Whether the content contains information relevant to the user's query.
\\
- "summary": (string) A concise, information-dense summary of the content relevant to the query. If 'is\_helpful' is false, briefly explain why (e.g., "Content is behind a paywall" or "Topic unrelated").
\\
- "relevance\_score": (integer) A score from 0-10 indicating how relevant this page is.
\\\\
Ensure the output is raw JSON, without Markdown formatting (like ```json).
 \\
\bottomrule
\end{tabularx}
\caption{Web page summarization Prompt Template}
\label{tab:summarization_prompt}
\end{table*}

\section{Hyperparameter Analysis}
\label{app:hyperparam}

As introduced in Section~\ref{sec:alg_detail}, BranPO involves two key hyperparameters: the maximum branching factor ($B$) and the maximum recursion depth ($R$). Specifically, $B$ corresponds to the upper bound of extra branch samples at each truncation step, while $R$ denotes the maximum number of backtracking steps allowed in the recursive truncation process. In practice, these values correspond to the largest outputs of the piecewise functions \text{Bran}(r, \text{acc}) and \text{Recur}(r, \text{acc}).

To investigate their impact, we evaluate BranPO under different ($B$, $R$) configurations , based on the \texttt{Qwen2.5-7B-Instruct} model after the first-stage training with ASearcher dataset.The results are shown in Table~\ref{tab:hyperparam}.

From the results, we draw two main observations. First, even with a reduced recursion depth, such as $B$=4, $R$=1, BranPO consistently outperforms both GRPO and Tree-GRPO, indicating that strong gains can be achieved with limited backtracking. Second, BranPO demonstrates a clear scaling pattern on multi-hop reasoning tasks: increasing either the branching factor or recursion depth generally leads to improved performance, suggesting that richer exploration of alternative trajectories provides more informative learning signals.

Overall, these results highlight the robustness of BranPO to hyperparameter choices and its ability to effectively trade off computation and performance.

\section{Connections between GRPO, DPO, and BranPO}
\label{sec:theory}

This appendix provides a compact theoretical account of (i) the exact relationship between BranPO and GRPO, (ii) why BranPO reduces variance in prefix credit assignment, and (iii) why BranPO's suffix optimization admits a preference-style (DPO-like) interpretation that can be more sample-efficient.

\subsection{Setup and Notation}
We consider an episodic agentic MDP with trajectories $\tau = (a_1,\dots,a_T)$ induced by a policy $\pi_\theta$.
Let $R(\tau)\in\{0,1\}$ denote the outcome reward.
Let $\pi_{\mathrm{ref}}$ be a fixed reference policy and $\beta>0$ the KL coefficient.

For a truncation index $k$, define the trajectory prefix and suffix as
\begin{align}
    \mathcal{B} &\triangleq \tau_{\le k} = (a_1,\dots,a_k),
\\
\mathcal{C} &\triangleq \tau_{>k} = (a_{k+1},\dots,a_T)
\end{align}We write concatenation as $(\mathcal{B},\mathcal{C})$.

BranPO samples $M$ suffix continuations $\{\mathcal{C}^{(i)}\}_{i=1}^M$ from the conditional policy $\pi_\theta(\cdot\mid \mathcal{B})$ and evaluates branch rewards
\begin{align}
    r_{\mathrm{branch}}^{(i)} &\triangleq R(\mathcal{B},\mathcal{C}^{(i)}),
\\
r_{\mathrm{base}}(\mathcal{B}) &\triangleq \frac{1}{M}\sum_{i=1}^M r_{\mathrm{branch}}^{(i)}
\end{align}

Both GRPO and BranPO use the same KL-regularized policy optimization loss; BranPO differs only in the construction/assignment of advantages.
Abstracting away implementation details (e.g., token-level sums, masking), the shared structure is
\begin{equation}
\begin{multlined}
\mathcal{L}(\theta)
=
-\mathbb{E}\Big[\sum_{t=1}^T \hat{A}_t \log \pi_\theta(a_t\mid s_t)\Big]
\\+
\beta\;\mathbb{E}\Big[\sum_{t=1}^T \mathrm{KL}\big(\pi_\theta(\cdot\mid s_t)\,\|\,\pi_{\mathrm{ref}}(\cdot\mid s_t)\big)\Big]
\end{multlined}
\end{equation}

GRPO assigns a single trajectory-level advantage to all steps, while BranPO assigns a base advantage to prefix steps and branch advantages to suffix steps.
Group normalization (as in GRPO) applies an affine transform within each task group and therefore does not change the arguments below except for per-group scaling; we omit it when analyzing unbiasedness/variance and reintroduce it in the final statements.

\subsection{BranPO Reduces to GRPO}
\label{app:grpo_reduction}

\begin{theorem}[Reduction to GRPO]
\label{thm:branpo_to_grpo}
Assume BranPO samples exactly one continuation per prefix ($M=1$) and uses the unmodified rollout suffix (equivalently, no effective truncation/branching). Then $r_{\mathrm{base}}(\mathcal{B})=R(\tau)$ and the BranPO update is identical to GRPO, including the KL regularization term.
\end{theorem}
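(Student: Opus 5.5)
The plan is a direct verification: substitute $M=1$ and the unmodified rollout suffix into the BranPO definitions and check, term by term, that the advantages and the loss coincide with GRPO's. \emph{First}, with $M=1$ and $\mathcal{C}^{(1)}=\tau_{>k}$, the only branch reward is $r_{\mathrm{branch}}^{(1)}=R(\mathcal{B},\tau_{>k})=R(\tau)$, since concatenating prefix and suffix recovers $\tau$. Averaging over a single branch gives $r_{\mathrm{base}}(\mathcal{B})=r_{\mathrm{branch}}^{(1)}=R(\tau)$, which is the first claim.

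\emph{Second}, I reintroduce the group normalization. For each prompt $q$, the base group $\{r_{\mathrm{base}}^{(q,j)}\}_{j=1}^N$ equals $\{R(\tau^{(q,j)})\}_{j=1}^N$. Because each trajectory contributes exactly one branch, the branch group $\{r_{\mathrm{branch}}^{(q,j,l)}\}$ is the same multiset. So both groups share the GRPO mean and standard deviation $\mu_r(q)$ and $\sigma_r(q)$. It follows that
\[
\hat{A}_{\mathrm{base}}^{(q,n)}=\hat{A}_{\mathrm{branch}}^{(q,n,1)}=\frac{R(\tau^{(q,n)})-\mu_r(q)}{\sigma_r(q)}=\hat{A}^{\mathrm{GRPO}}_{q,n}.
\]
This identity holds for every choice of $k$.

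\emph{Third}, I compare the loss in the shared structure $\mathcal{L}(\theta)$. BranPO assigns $\hat{A}_{\mathrm{base}}$ to the steps $t\le k$ through $\log p_\theta(\mathcal{B})=\sum_{t\le k}\log\pi_\theta(a_t\mid s_t)$. It assigns $\hat{A}_{\mathrm{branch}}$ to the steps $t>k$ through $\log p_\theta(\mathcal{C}\mid\mathcal{B})=\sum_{t>k}\log\pi_\theta(a_t\mid s_t)$. Since the two advantages are equal, the per-step advantage $\hat{A}_t$ is the constant GRPO advantage for all $t$, and the two sums combine into $\sum_{t=1}^T\hat{A}^{\mathrm{GRPO}}\log\pi_\theta(a_t\mid s_t)$.

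\emph{The main obstacle} is the KL term and the bookkeeping of how many times the prefix appears. The prefix must be counted once, not duplicated per branch. With $M=1$ there is exactly one prefix and one suffix per trajectory, so the trajectory's token set is partitioned into $\{t\le k\}\cup\{t>k\}$ with no repetition. The states $s_t$ are the same histories in both methods, and the KL penalty is a per-step sum over that same partition. Hence $\beta\sum_t\mathrm{KL}(\pi_\theta(\cdot\mid s_t)\,\|\,\pi_{\mathrm{ref}}(\cdot\mid s_t))$ coincides exactly, and token-mean aggregation sees the identical token multiset. This gives the same objective, and therefore the same update.
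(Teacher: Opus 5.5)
Your proposal is correct and follows the same direct-substitution argument as the paper: with $M=1$ and the rollout suffix, $r_{\mathrm{branch}}^{(1)}=R(\tau)=r_{\mathrm{base}}(\mathcal{B})$, and because BranPO and GRPO share the same policy-gradient and KL structure, the two updates coincide. Your version is more careful than the paper's, which only asserts equality ``up to the same within-group normalization.'' You explicitly check that the base and branch groups are the same multiset of rewards, so prefix and suffix steps receive the identical normalized advantage. You also check that the prefix/suffix split partitions the tokens without duplication, so the KL and token-mean terms match exactly.
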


\begin{proof}
When $M=1$ and the continuation is the rollout suffix, we have $r_{\mathrm{branch}}^{(1)}=R(\tau)$ and thus
\begin{equation}
    r_{\mathrm{base}}(\mathcal{B})
=
\frac{1}{1} r_{\mathrm{branch}}^{(1)}
=
R(\tau)
\end{equation}

Hence the advantage assigned to prefix steps equals the usual trajectory return (up to the same within-group normalization used by GRPO), and there are no additional branch trajectories beyond the rollout itself.
Because the policy-gradient term and the KL term are unchanged by design, the resulting loss and gradient match GRPO exactly.
\end{proof}

 \paragraph{Takeaway.} In the absence of effective branches, trajectory prefixes are strictly correct or incorrect. Under these conditions, BranPO becomes equivalent to GRPO, demonstrating that it retains the same capacity to optimize the prefix.
 
\subsection{Prefix Variance Reduction}
\label{app:variance}

The core difficulty in multi-turn agent RL is that early actions in $\mathcal{B}$ may be correct but are penalized due to stochastic or fragile failures in the late suffix.
BranPO addresses this by estimating the \emph{conditional downstream value} of a prefix,
\begin{equation}
  Q^{\pi_\theta}(\mathcal{B})
\triangleq
\mathbb{E}_{u\sim \pi_\theta(\cdot\mid \mathcal{B})}\big[R(\mathcal{B},\mathcal{C})\big]  
\end{equation}
using multiple suffix samples and the sample mean $r_{\mathrm{base}}(\mathcal{B})$.

\begin{lemma}[Unbiased prefix value estimate without filtering]
\label{lem:unbiased_base}
If $\{u^{(i)}\}_{i=1}^M$ are i.i.d.\ samples from $\pi_\theta(\cdot\mid \mathcal{B})$, then
\begin{equation}
    \mathbb{E}\big[r_{\mathrm{base}}(\mathcal{B})\mid \mathcal{B}\big] = Q^{\pi_\theta}(\mathcal{B})
\end{equation}
\end{lemma}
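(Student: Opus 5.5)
The argument is linearity of conditional expectation applied to the sample mean. I identify the suffix variable $u^{(i)}$ in the statement with the continuation $\mathcal{C}^{(i)}$. Then each branch reward is $r_{\mathrm{branch}}^{(i)} = R(\mathcal{B},u^{(i)})$, and the definition of $r_{\mathrm{base}}(\mathcal{B})$ gives
\begin{equation*}
\mathbb{E}\big[r_{\mathrm{base}}(\mathcal{B})\mid \mathcal{B}\big]
= \frac{1}{M}\sum_{i=1}^M \mathbb{E}\big[R(\mathcal{B},u^{(i)})\mid \mathcal{B}\big].
\end{equation*}
This step uses only linearity. It needs no independence and no assumption about $M$ beyond $M\ge 1$ being fixed, that is, chosen independently of the sampled outcomes.

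Next I evaluate each summand. Conditional on $\mathcal{B}$, each $u^{(i)}$ has marginal law $\pi_\theta(\cdot\mid\mathcal{B})$. Hence $\mathbb{E}[R(\mathcal{B},u^{(i)})\mid\mathcal{B}] = \mathbb{E}_{u\sim\pi_\theta(\cdot\mid\mathcal{B})}[R(\mathcal{B},u)] = Q^{\pi_\theta}(\mathcal{B})$. The expectation is well defined because $R\in\{0,1\}$ is bounded. The average of $M$ copies of $Q^{\pi_\theta}(\mathcal{B})$ is $Q^{\pi_\theta}(\mathcal{B})$, which proves the claim. I would also note that only the identical-marginal part of the i.i.d.\ hypothesis is used here. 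Independence is needed only for the companion variance statement, where it gives $\mathrm{Var}[r_{\mathrm{base}}\mid\mathcal{B}] = Q(1-Q)/M$.

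The main obstacle is conceptual rather than computational. The lemma must be read literally as the unfiltered case, meaning a fixed number $M$ of samples and no contrastive acceptance rule. The recursive branching of Algorithm~\ref{alg:branpo} stops as soon as a branch with a differing reward appears and retains the original rollout suffix. Under that rule, $|\mathcal{C}|$ and the retained samples depend on the outcomes, so the sample mean is in general biased. I would state explicitly that the hypothesis excludes this stopping rule. I would also point out that the original rollout suffix is itself a draw from $\pi_\theta(\cdot\mid\mathcal{B})$, so it can legitimately be counted among the $M$ i.i.d.\ samples provided no filtering is applied.
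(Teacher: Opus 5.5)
Your argument is correct and is the paper's proof: conditional on $\mathcal{B}$, each branch reward has mean $Q^{\pi_\theta}(\mathcal{B})$, and linearity of expectation applied to the sample mean gives the claim. Your observation that only identical marginals, not independence, are needed is a valid sharpening.

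One caution on your closing side remark. The original rollout suffix counts as a draw from $\pi_\theta(\cdot\mid\mathcal{B})$ only if the truncation index is chosen without looking at the suffix. The tail-anchored truncation of Algorithm~\ref{alg:branpo} (e.g., cutting just before the final answer) reveals the suffix length once you condition on the selected prefix. So counting that suffix among the samples can bias $r_{\mathrm{base}}$ even without contrastive filtering.
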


\begin{proof}
Conditioned on $\mathcal{B}$, the branch rewards $\{r_{\mathrm{branch}}^{(i)}\}$ are i.i.d.\ with mean $Q^{\pi_\theta}(\mathcal{B})$.
Linearity of expectation yields
\begin{align}
    \mathbb{E}\big[r_{\mathrm{base}}(\mathcal{B})\mid \mathcal{B}\big]
&=
\frac{1}{M}\sum_{i=1}^M
\mathbb{E}\big[r_{\mathrm{branch}}^{(i)}\mid \mathcal{B}\big]
\\&=
Q^{\pi_\theta}(\mathcal{B})
\end{align}
\end{proof}

\begin{lemma}[Conditional variance reduction]
\label{lem:var_reduction}
Under the assumptions of Lemma~\ref{lem:unbiased_base},
\begin{align}
 \mathrm{Var}\big(r_{\mathrm{base}}(\mathcal{B})\mid \mathcal{B}\big)
&=
\frac{1}{M}\mathrm{Var}\big(R(\mathcal{B},\mathcal{C})\mid \mathcal{B}\big)
\\&\le
\frac{1}{4M}   
\end{align}
\end{lemma}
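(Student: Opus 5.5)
The plan is to reuse the conditional i.i.d.\ structure from Lemma~\ref{lem:unbiased_base}. We work conditionally on $\mathcal{B}$ and write $X_i \triangleq r_{\mathrm{branch}}^{(i)} = R(\mathcal{B},\mathcal{C}^{(i)})$. Under the hypotheses of that lemma, the $X_i$ are i.i.d.\ given $\mathcal{B}$. Each has the same conditional law as $R(\mathcal{B},\mathcal{C})$ with $\mathcal{C}\sim\pi_\theta(\cdot\mid\mathcal{B})$, and in particular each has conditional mean $Q^{\pi_\theta}(\mathcal{B})$.

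The first step is the equality. Since $r_{\mathrm{base}}(\mathcal{B}) = \frac{1}{M}\sum_{i=1}^M X_i$, we have
\begin{equation*}
\mathrm{Var}\big(r_{\mathrm{base}}(\mathcal{B})\mid\mathcal{B}\big) = \frac{1}{M^2}\Big(\sum_{i=1}^M \mathrm{Var}(X_i\mid\mathcal{B}) + \sum_{i\neq j}\mathrm{Cov}(X_i,X_j\mid\mathcal{B})\Big).
\end{equation*}
Conditional independence makes every cross-covariance vanish. Identical distribution makes each $\mathrm{Var}(X_i\mid\mathcal{B})$ equal to $\mathrm{Var}(R(\mathcal{B},\mathcal{C})\mid\mathcal{B})$. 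This gives $\frac{M}{M^2}\mathrm{Var}(R\mid\mathcal{B}) = \frac{1}{M}\mathrm{Var}(R\mid\mathcal{B})$.

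The second step is the bound $\frac{1}{4M}$. It follows from the fact that any random variable supported in $[0,1]$ has variance at most $\frac14$. In the setup, $R\in\{0,1\}$, so conditionally $R$ is Bernoulli with parameter $p = Q^{\pi_\theta}(\mathcal{B})$. Its variance is then $p(1-p)\le\frac14$ by AM--GM, with equality at $p=\frac12$. The same bound also covers the F1-valued reward used in practice, since F1 lies in $[0,1]$. For that case I would argue via $R^2\le R$: this gives $\mathrm{Var}(R)\le \mathbb{E}R - (\mathbb{E}R)^2 \le \frac14$. Dividing by $M$ finishes the proof.

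No step is a real obstacle. The only point needing care is stating clearly that the i.i.d.\ assumption is essential. The contrastive filtering in Algorithm~\ref{alg:branpo} (stopping once an outcome-divergent branch is found, and including the original suffix) breaks both independence and the fixed-$M$ sampling. So the lemma applies to the unfiltered estimator, matching the title of Lemma~\ref{lem:unbiased_base}. I would add a remark to that effect rather than attempt to extend the bound to the filtered case.
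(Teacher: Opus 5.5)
Your proof is correct and follows the paper's argument: conditional i.i.d.\ sampling gives the $1/M$ scaling of the variance of the mean, and $R\in\{0,1\}$ gives $\mathrm{Var}(R)\le 1/4$. Like your closing remark, the paper separately notes that contrastive filtering breaks this i.i.d.\ setting. One small caveat on your aside: the practical reward is not purely F1 in $[0,1]$, since the paper can assign a $-0.5$ format penalty, so for that reward the range bound becomes $(3/2)^2/4 = 9/16$ rather than $1/4$; this does not affect the lemma as stated.
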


\begin{proof}
By i.i.d.\ sampling, the variance of the mean scales as $1/M$:
\begin{align}
\mathrm{Var}\big(r_{\mathrm{base}}(\mathcal{B})\mid \mathcal{B}\big)
&= \frac{1}{M^2}\sum_{i=1}^M
\mathrm{Var}\big(r_{\mathrm{branch}}^{(i)}\mid \mathcal{B}\big) \notag \\
&= \frac{1}{M}\mathrm{Var}\big(r_{\mathrm{branch}}^{(1)}\mid \mathcal{B}\big)
\end{align}
Since $R(\cdot)\in\{0,1\}$, $\mathrm{Var}(R)\le 1/4$, which gives the final inequality.
\end{proof}

Consider the (unclipped) REINFORCE-style prefix term for a fixed prefix $\mathcal{B}$:

\begin{align}
    G_{\mathrm{prefix}}
&\triangleq
r_{\mathrm{base}}(\mathcal{B})\;\nabla_\theta \log \pi_\theta(\mathcal{B}),
\\
\log \pi_\theta(\mathcal{B}) &\triangleq \sum_{t=1}^k \log \pi_\theta(a_t\mid s_t)
\end{align}

Conditioned on $\mathcal{B}$, the randomness in $G_{\mathrm{prefix}}$ arises solely from $r_{\mathrm{base}}(\mathcal{B})$, hence
\begin{equation}
\begin{multlined}
\mathrm{Var}\big(G_{\mathrm{prefix}}\mid \mathcal{B}\big) \\
= \|\nabla_\theta \log \pi_\theta(\mathcal{B})\|^2\;
\mathrm{Var}\big(r_{\mathrm{base}}(\mathcal{B})\mid \mathcal{B}\big)
\end{multlined}
\end{equation}
By Lemma~\ref{lem:var_reduction}, this conditional variance contracts by a factor $1/M$ compared to using a single rollout return. This is the formal mechanism by which branching stabilizes prefix credit assignment.

In practice, BranPO keeps only \emph{contrastive} branching outcomes. This selection changes the sampling distribution and generally introduces bias relative to $Q^{\pi_\theta}(\mathcal{B})$.
However, the same mechanism still reduces variance of the \emph{prefix learning signal} because (i) it aggregates multiple suffix outcomes, and (ii) it compresses the range of possible base returns.

To make the compression explicit, consider binary rewards $R(\tau)\in\{0,1\}$ and a contrastive rule that retains prefixes only when the retained branch set contains at least one success and at least one failure:
$$
\exists\, i,j \;\; \text{s.t.} \;\;
r_{\mathrm{branch}}^{(i)}=1,\;
r_{\mathrm{branch}}^{(j)}=0
$$

\begin{lemma}[Range compression under contrastive filtering]
\label{lem:range_compression}
If $R\in\{0,1\}$ and the retained branch set for a prefix $\mathcal{B}$ contains at least one $1$ and one $0$, then
$$
r_{\mathrm{base}}(\mathcal{B}) \in \Big[\frac{1}{M},\, 1-\frac{1}{M}\Big]
$$
and therefore, for any random variable supported on this interval,
$$
\mathrm{Var}\big(r_{\mathrm{base}}(\mathcal{B})\big)
\le
\frac{(1-2/M)^2}{4}
<
\frac{1}{4}
$$
\end{lemma}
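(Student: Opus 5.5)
The plan is a direct counting argument followed by a standard bound on the variance of a bounded random variable.

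\textbf{Range of $r_{\mathrm{base}}$.} Since $R\in\{0,1\}$, write $s\triangleq\sum_{i=1}^M r_{\mathrm{branch}}^{(i)}$ for the number of successful branches in the retained set of size $M$. Then $r_{\mathrm{base}}(\mathcal{B})=s/M$. The contrastive rule forces at least one $1$, so $s\ge 1$, and at least one $0$, so $s\le M-1$. Hence $s/M\in[1/M,\,1-1/M]$, which is the first claim. Note that the rule itself forces $M\ge 2$, so this interval is nonempty and its length $1-2/M$ lies in $[0,1)$.

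\textbf{Variance bound.} For a random variable $X$ supported on $[a,b]$, I will prove Popoviciu's bound $\mathrm{Var}(X)\le (b-a)^2/4$ in one line. Let $c=(a+b)/2$. Since the mean minimizes expected squared deviation,
$$
\mathrm{Var}(X)=\min_{t}\mathbb{E}[(X-t)^2]\le \mathbb{E}[(X-c)^2].
$$
Moreover $|X-c|\le (b-a)/2$ pointwise, so $\mathbb{E}[(X-c)^2]\le (b-a)^2/4$. Taking $a=1/M$ and $b=1-1/M$ gives $b-a=1-2/M$, hence
$$
\mathrm{Var}\big(r_{\mathrm{base}}(\mathcal{B})\big)\le \frac{(1-2/M)^2}{4}.
$$
Strict inequality with $1/4$ follows because $0\le 1-2/M<1$ when $M\ge 2$.

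\textbf{Main subtlety.} The only delicate point is the meaning of $M$ when the retained branch set size differs across prefixes. I will state the bound conditionally on the retained set size $M$, where the argument above applies verbatim. If an unconditional statement is wanted, the same argument works with $M$ replaced by a common lower bound $M_{\min}\ge 2$ on the retained set sizes. This is because every $[1/M,\,1-1/M]$ with $M\ge M_{\min}$ is contained in $[1/M_{\min},\,1-1/M_{\min}]$, so all values of $r_{\mathrm{base}}$ lie in the latter interval.
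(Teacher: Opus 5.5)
Your proof is correct and follows the paper's argument. You count the successes as an integer in $\{1,\dots,M-1\}$ to get $r_{\mathrm{base}}(\mathcal{B})\in[1/M,1-1/M]$, then apply the $(b-a)^2/4$ variance bound for a variable supported on $[a,b]$; you prove that bound via the midpoint, whereas the paper only asserts it and notes the two-point extremal case.

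One correction to your closing remark: the intervals $[1/M,1-1/M]$ \emph{widen} as $M$ grows, so for varying retained-set sizes you need a common \emph{upper} bound $M_{\max}$, not a lower bound $M_{\min}$. Every $[1/M,1-1/M]$ with $M\le M_{\max}$ lies in $[1/M_{\max},1-1/M_{\max}]$, which gives the bound $(1-2/M_{\max})^2/4<1/4$.
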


\begin{proof}
With at least one success and one failure among $M$ Bernoulli samples, the number of successes is an integer in $\{1,\dots,M-1\}$, hence the mean is in $[1/M,1-1/M]$.
Any random variable supported on $[a,1-a]$ has variance at most $(1-2a)^2/4$ (achieved by a two-point distribution at the endpoints), yielding the bound with $a=1/M$.
\end{proof}

\paragraph{Takeaway.}
Unlike GRPO, which relies on a single binary outcome outcome ($\{0,1\}$) per rollout, contrastive branching keeps the base reward $r_{\mathrm{base}}(\mathcal{B})$ away from extremes. This mitigates large-magnitude, high-variance prefix updates driven by tail stochasticity, effectively stabilizing training without requiring expensive Monte Carlo sampling.This mechanism aligns directly with BranPO's design goal: to efficiently attenuate ambiguous credit assignment in scenarios where a single prefix can lead to both success and failure.
\begin{figure*}[t]
    \centering
    \includegraphics[width=0.5\textwidth]{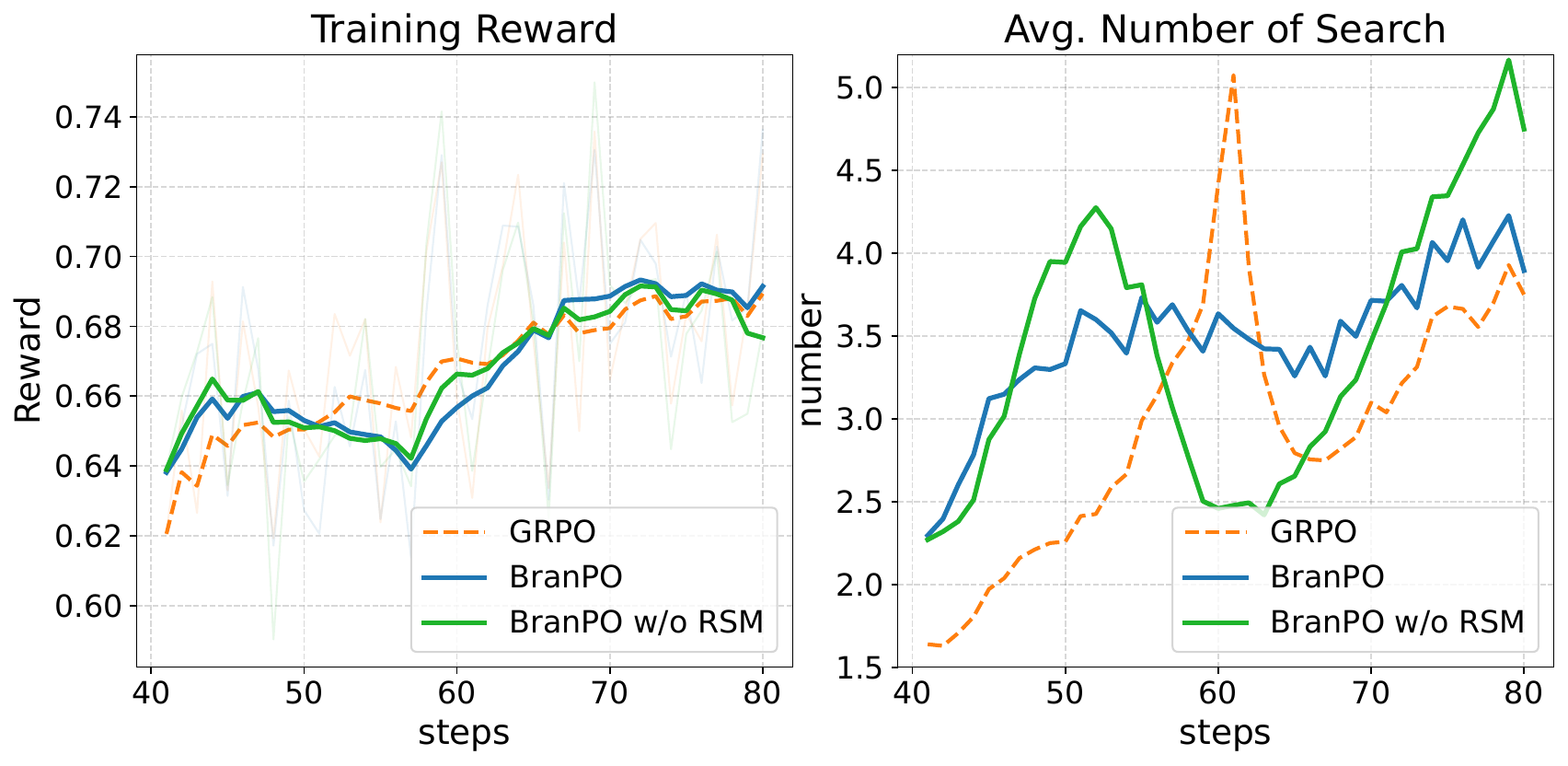}
    \caption{Training dynamics when scaling the number of training interactions to 8, showing the reward and the average number of search steps during training for GRPO, BranPO, and BranPO without RSM on Qwen3-4B-Instruct. RSM effectively suppresses unnecessary growth in search steps.}
    \label{fig:ablation_training}
\end{figure*}
\subsection{Suffix Updates as Preference-Style Optimization}
\label{app:dpo_like}
BranPO's suffix learning signal becomes preference-style because branches share the same prefix $\mathcal{B}$ and differ only in the continuation $\mathcal{C}$.

Suppose from a fixed prefix $\mathcal{B}$ we obtain a ``good'' suffix $\mathcal{C}^+$ and a ``bad'' suffix $\mathcal{C}^-$ such that
$$
R(\mathcal{B},\mathcal{C}^+) > R(\mathcal{B},\mathcal{C}^-)
$$
This defines an implicit preference pair conditioned on $\mathcal{B}$.

Consider the suffix-only log-probabilities $\log \pi_\theta(\mathcal{C}\mid \mathcal{B})$.
If the branch advantages are centered so that the two suffixes receive opposite weights,
$$
\hat{A}_{\mathrm{branch}}(\mathcal{C}^+) = +c,\qquad \hat{A}_{\mathrm{branch}}(\mathcal{C}^-) = -c
$$
then the suffix contribution to the policy-gradient term is proportional to
\begin{equation}
\begin{multlined}
-c\;\nabla_\theta \log \pi_\theta(\mathcal{C}^+\mid \mathcal{B})
+ c\;\nabla_\theta \log \pi_\theta(\mathcal{C}^-\mid \mathcal{B}) \\
= -c\;\nabla_\theta \Big(\log \pi_\theta(\mathcal{C}^+\mid \mathcal{B}) - \log \pi_\theta(\mathcal{C}^-\mid \mathcal{B})\Big)
\end{multlined}
\end{equation}
Thus, BranPO's suffix updates increase the conditional log-likelihood ratio
$$
\Delta_\theta(\mathcal{B})
\triangleq
\log \pi_\theta(\mathcal{C}^+\mid \mathcal{B}) - \log \pi_\theta(\mathcal{C}^-\mid \mathcal{B})
$$
while simultaneously applying the same KL regularization to $\pi_{\mathrm{ref}}$ as in GRPO.

\paragraph{Connection to DPO.}
DPO optimizes a logistic preference likelihood with a reference correction. For a conditional preference pair $(\mathcal{B},\mathcal{C}^+,\mathcal{C}^-)$, define

    \begin{align}
        z_\theta(\mathcal{B})
&\triangleq
\beta\Big(
\Delta_\theta(\mathcal{B})
-
\Delta_{\mathrm{ref}}(\mathcal{B})
\Big),
\\
\Delta_{\mathrm{ref}}(\mathcal{B})
&\triangleq
\log \pi_{\mathrm{ref}}(\mathcal{C}^+\mid \mathcal{B}) - \log \pi_{\mathrm{ref}}(\mathcal{C}^-\mid \mathcal{B})
    \end{align}

A DPO-style loss on suffixes is
$$
\mathcal{L}_{\mathrm{DPO\text{-}suffix}}(\theta)
=
-\mathbb{E}\Big[\log \sigma\big(z_\theta(\mathcal{B})\big)\Big]
$$
Its gradient for one pair is
\begin{equation}
    \nabla_\theta \mathcal{L}_{\mathrm{DPO\text{-}suffix}}(\theta)
=
-\beta\big(1-\sigma(z_\theta(\mathcal{B}))\big)\;
\nabla_\theta \Delta_\theta(\mathcal{B})
\end{equation}
When $z_\theta(\mathcal{B})$ is near $0$ (i.e., the model is near-indifferent between $\mathcal{C}^+$ and $\mathcal{C}^-$ under the reference-corrected logit),
$$
1-\sigma(z_\theta(\mathcal{B})) \approx \frac{1}{2}
$$
so the DPO gradient is approximately a constant multiple of $-\nabla_\theta \Delta_\theta(\mathcal{B})$, matching the BranPO suffix direction above (up to a scalar and the explicit KL penalty already present in GRPO/BranPO).
Therefore, BranPO's suffix optimization is \emph{DPO-like} in the precise sense of performing pairwise log-ratio improvement on contrastive continuations conditioned on the same intermediate state.

\paragraph{Takeaway.}
In search agents, errors concentrate near the tail.
Branching reuses the same expensive prefix $\mathcal{B}$ and explores only the suffix distribution $\pi_\theta(\cdot\mid\mathcal{B})$, yielding:
\begin{itemize}
\item \textbf{Compute concentration:} additional sampling cost scales with suffix length rather than full trajectory length.
\item \textbf{Denser learning signal:} contrastive continuations provide a strong discriminative signal localized to the tail, while prefix gradients are stabilized by $r_{\mathrm{base}}(\mathcal{B})$ (Appendix~\ref{app:variance}).
\end{itemize}
This explains why BranPO can be substantially more sample-efficient than GRPO when failures are dominated by late-stage synthesis.

\section{Additional Analysis of Redundant Step Masking}
\label{app:rsm_analysis}

This section provides additional analysis of the role of Redundant Step Masking (RSM) under extended training horizons. Starting from the first-stage trained models, we continue optimization into the second stage with the maximum number of training interactions scaled to 8.

As shown in Figure~\ref{fig:ablation_training}, RSM plays a critical role in stabilizing long-horizon optimization by suppressing the unnecessary growth of search steps. In contrast, removing RSM leads to less stable training dynamics, characterized by excessive exploration and a rapid increase in search depth. These results indicate that RSM is particularly important for maintaining efficient and controlled search behavior when scaling the interaction horizon.

\section{Case Study}
\subsection{Correctness Non-Monotonic}
\label{app:tail_errors}
We present representative cases illustrating \textbf{non-monotonic correctness} in multi-turn agentic trajectories. All trajectories shown in these tables are generated by Qwen2.5-7B-Instruct after SFT.

Table~\ref{tab:case_study:self_correction} presents a positive example of self-correction within the agentic trajectory. In this case, the initial search returns documents that primarily mention Juan Jufr\'e in the context of his activities in Chile. Based on this incomplete evidence, the agent initially misjudges the situation, suspecting that the question's reference to an ``Argentine governor'' may be a spelling error or a confusion with another historical figure. Crucially, however, the agent does not abandon the task or fabricate an answer. Instead, it recognizes the uncertainty in its intermediate conclusion and decides to refine its search keywords to further verify and correct its earlier judgment, eventually arriving at the correct answer. This case highlights the importance of persistent verification and adaptive search strategies in mitigating non-monotonic correctness.

Table~\ref{tab:case_study:insufficient} illustrates the first failure mode. In this case, the agent retrieves an outdated piece of information regarding the finance minister of Maharashtra but fails to issue follow-up queries to verify whether this information remains current. Instead of continuing the search to obtain the most up-to-date answer, the agent prematurely draws a conclusion based on the stale result, ultimately producing an incorrect answer.

Table~\ref{tab:case_study:hallucianation} demonstrates the second failure mode. Here, the agent successfully retrieves documents that collectively contain all the information needed to answer the question correctly. However, rather than carefully synthesizing and summarizing the retrieved evidence, the agent directly generates an answer without grounding it in the search results, leading to a hallucinated response that contradicts the very documents it had gathered.

\begin{table*}[hb]
    \centering
    \small 
    \begin{tabularx}{\textwidth}{
        >{\hsize=1\hsize}X  
        |                   
        >{\hsize=1\hsize}X  
    }
    \toprule
    
    \multicolumn{2}{p{\dimexpr\textwidth-2\tabcolsep}}{
        \textbf{Question:} Juan Jufr\'e was the governor of the Argentine province that historially comprised which provinces?
    } \\ 
    \addlinespace[0.2em]
    \multicolumn{2}{p{\dimexpr\textwidth-2\tabcolsep}}{
        \textbf{Ground Truth:} San Juan, San Luis and Mendoza
    } \\
    
    \midrule
    
    \multicolumn{2}{p{\dimexpr\textwidth-2\tabcolsep}}{
        \think{I need to determine which Argentine province Juan Jufr\'e was the governor of, and then find out which provinces were historically part of that province.}\newline
        \search{who was Juan Jufr\'e} \newline
        \info{[Document 1] Jofre Jofre or Jofr\'e may refer to the following people - Given name - Jofre de Foix\`a (died c. 1300), Italian poet, musician......
[Document 2] Juan Jufr\'e Accomplishments. Jufr\'e made a fortune developing important economic activities in Chile. He built the first shipyard next to the Maule River, in his encomienda of Pocoa.......
[Document 3] Juan Jufr\'e r a textile factory amid his two encomiendas (Mataquito and Peteroa) on the banks of the Mataquito River........
}\newline
\think{I found that \"Juan Jufr\'e\" is a name associated with Chile, not Argentina. However, the question refers to \"Juan Jufr\'e\" as the governor of an Argentine province. This suggests a possible misspelling or confusion. The name \"Juan Jufr\'e\" might be a Spanish variant of \"Juan Freire\" or similar. In Argentina, a common name is \"Juan Jos\'e\" or \"Juan\", and \"Jufr\'e\" might be a misspelling of \"Fr\'eda\" or \"F\'erte\". Alternatively, it could be a confusion with \"Juan Bautista\" or \"Juan de la Cruz\". Another possibility is that the name is a misspelling of \"Juan Bautista Alberdi\" or \"Juan Bautista\" related to a province. I will conduct a search to find the correct name of the Argentine governor possibly intended in the question.}\newline
\search{governor of Argentine province with name similar to Juan Jufr\'e}\newline
 \info{ [Document 1] Juan Jufr\'e de Loayza y Montesa was a Spanish conquistador who participated in the 1541 expedition of Pedro de Valdivia to Chile......

[Document 2] Governor of Jujuy. Province The Governor of Jujuy is a citizen of the Jujuy Province, in Argentina, holding the office of governor for the corresponding period.......

[Document 3] Governor of Formosa Province The Governor of Formosa is a citizen of the Formosa Province, in Argentina.......}
\newline
     \think{From the search, I found that Juan Jufr\'e is referred to as the governor of the Argentine province of Cuyo. Now I need to determine which provinces were historically part of Cuyo.}\newline
    \search{provinces that historically comprised Cuyo in Argentina}\newline
    \info{
[Document 1] Cuyo Province. The Province of Cuyo was a historical Province of Argentina. Created on 14 November 1813 by a decree issued by the Second Triumvirate, it had its capital on Mendoza, and was composed by the territories of the present Argentine Provinces of Mendoza, San Juan and San Luis.......
[Document 2] Cuyo (Argentina)  Cuyo is the name given to the wine-producing, mountainous area of central-west Argentina. Historically it comprised the provinces of San Juan, San Luis and Mendoza.......
[Document 3] For centuries in colonial times, the region was part of the Captaincy General of Chile, at the time part of the Viceroyalty of Peru.......
    }\newline
    \think{The Argentine province that Juan Jufr\'e was the governor of is Cuyo, which historically comprised the provinces of Mendoza, San Juan, and San Luis.}\newline
    \answer{Mendoza, San Juan, and San Luis}
    }
        \\
    \bottomrule
    \end{tabularx}
    \caption{Case Study: Self correction.}
 \label{tab:case_study:self_correction}
\end{table*}

\begin{table*}[t]
    \centering
    \small 

    \begin{tabularx}{\textwidth}{
        >{\hsize=1\hsize}X  
        |                   
        >{\hsize=1\hsize}X  
    }
    \toprule
    
    \multicolumn{2}{p{\dimexpr\textwidth-2\tabcolsep}}{
        \textbf{Question:} Who is the finance minister of the state that generates the most revenue in India?
    } \\ 
    \addlinespace[0.2em]
    \multicolumn{2}{p{\dimexpr\textwidth-2\tabcolsep}}{
        \textbf{Ground Truth:} Sudhir R. Mungantiwar
    } \\
    
    \midrule
    
    \multicolumn{2}{c}{\textbf{--- Shared Trajectory ---}} \\
    \multicolumn{2}{p{\dimexpr\textwidth-2\tabcolsep}}{
        \think{To answer this question, I first need to find out which state in India generates the most revenue. India is a federal structure, and each state has the authority to levy taxes, but not all states are equal in terms of revenue generation.}\newline
        \search{state with the most revenue generation in India} \newline
        \info{ [Document 1] Income in India the world's largest middle income market.

Compared to other countries......

[Document 2] Economic development in India and even reach the government's 2011 target of ......

[Document 3] Economy of India all crops has grown since 1950, due to the special emphasis placed on agriculture in the five-year plans and steady improvements in irrigation, technology......}\newline
\think{From the information provided, it seems that the economy of states like Gujarat, Maharashtra, Tamil Nadu, and others have been growing significantly. However, the specific state that generates the most revenue is not explicitly stated. Typically, the state of Maharashtra is known for being a significant contributor to the country's economy and revenue, due to its industrial and commercial activities. Let's verify this with specific data.}\newline
\search{finance minister of Maharashtra}\newline
 \info{ [Document 1] Diliprao Deshmukh Diliprao Deshmukh

Diliprao Dagdojirao Deshmukh () (born 18 April 1950) is an Indian politician, ......

[Document 2]Eknath Khadse Eknath Khadse

Eaknath Ganpatrao Khadse (born 2 September 1952) is a politician and senior leader of the Bharatiya Janata Party (BJP) in Maharashtra state. He is a Member of Legislative Assembly of Maharashtra from Muktainagar taluka......

[Document 3] Sanjay Shamrao Dhotre Sanjay Shamrao Dhotre

Sanjay Shamrao Dhotre (born 26 February 1959) is an Indian politician from ......}
    } \\
    
    \midrule
    
    \textbf{Case 1: Failure} & \textbf{Case 2: Success} \\
    \midrule
    
    \think{Given the information, the state of Maharashtra, which is known for being economically significant, has had a finance minister in recent years. Eknath Khadse and Sanjay Dhotre have both held the position of Finance Minister of Maharashtra. To definitively answer the question, we need the current finance minister. Since the date of the query is not specified, let's assume we are looking for the most current information.

Since Sanjay Dhotre is currently a minister in the central government, and without specific recent data, let's consider the most recent finance minister of Maharashtra, which is Eknath Khadse as of the last available information.}\newline
\answer{Eknath Khadse}
    
    & 
    \think{Based on the information provided, Diliprao Deshmukh was a Minister of State for Finance \& Planning in the state of Maharashtra. More notably, Eknath Khadse held the position of Finance Minister of Maharashtra from 2014 to 2015. Since Maharashtra is one of the states that generate significant revenue for India, we can conclude that the finance minister of the state that generates the most revenue in India, in this case, is likely one from Maharashtra. However, to be precise, we need to identify the current finance minister of Maharashtra.}\newline
    \search{current finance minister of Maharashtra}\newline
    \info{[Document 1] Eknath Khadse Eknath Khadse

Eaknath Ganpatrao Khadse (born 2 September 1952) is a politician and senior leader of the Bharatiya Janata Party (BJP) ......

[Document 2]Sanjay Shamrao Dhotre Sanjay Shamrao Dhotre

Sanjay Shamrao Dhotre (born 26 February 1959) is an Indian politician from the state of Maharashtra ......

[Document 3]Sudhir Mungantiwar Sudhir Mungantiwar

Sudhir Mungantiwar (born 30 July 1962) is a Legislator from the state of Maharashtra, India. He currently serves as the Cabinet Minister......}\newline
\think{Based on the current information:
\begin{itemize}
    \item Eknath Khadse was Finance Minister of Maharashtra from 2014 to 2015.
    \item Sanjay Shamrao Dhotre is a Minister of State in the Government of India, not currently the Finance Minister of Maharashtra.
    \item Sudhir Mungantiwar is the current Cabinet Minister of Finance \& Planning in the Government of Maharashtra, having taken office in October 2014.
\end{itemize}
}\newline
\answer{Sudhir Mungantiwar}

    \\
    \bottomrule
    \end{tabularx}
    \caption{Case Study: Insufficient Search.}
    \label{tab:case_study:insufficient}
\end{table*}

\begin{table*}[t]
    \centering
    \small 
    \begin{tabularx}{\textwidth}{
        >{\hsize=1\hsize}X  
        |                   
        >{\hsize=1\hsize}X  
    }
    \toprule
    
    \multicolumn{2}{p{\dimexpr\textwidth-2\tabcolsep}}{
        \textbf{Question:} Which of these artists—Artemisia Gentileschi, Rosa Barba, or Lavinia Fontana—has the earliest birth date?
    } \\ 
    \addlinespace[0.2em]
    \multicolumn{2}{p{\dimexpr\textwidth-2\tabcolsep}}{
        \textbf{Ground Truth:} Lavinia Fontana
    } \\
    
    \midrule
    
    \multicolumn{2}{c}{\textbf{--- Shared Trajectory ---}} \\
    \multicolumn{2}{p{\dimexpr\textwidth-2\tabcolsep}}{
        \think{I need to find the birth dates of each artist to compare and determine which one is the earliest.}\newline
        \search{Artemisia Gentileschi birth date} \newline
        \info{ [Document 1] Gentileschi Gentileschi

Gentileschi can refer to:

Orazio Gentileschi (1563-1639), an Italian painter

Artemisia Gentileschi (1593-1653), a Baroque painter (and Orazio's daughter)

[Document 2] Artemisia Gentileschi and expressive painters of her generation, and is due to be celebrated at the exhibition at the National Gallery in London in 2020.......

[Document 3] Artemisia (film) Artemisia (film)

Artemisia is a 1997 French-German-Italian biographical film about Artemisia Gentileschi, the female Italian Baroque painter. The film was directed by Agnès Merlet, and stars Valentina Cervi and ......}\newline
\think{From the search, I found that Artemisia Gentileschi was born on July 8, 1593. Now I will find the birth dates of Rosa Barba and Lavinia Fontana.}\newline
\search{Rosa Barba birth date}\newline
 \info{ [Document 1] Rosa Barba Rosa Barba

Rosa Barba (born 1972, Agrigento, Italy) is a German-Italian visual artist and filmmaker. Barba is known for using the medium of film and its materiality to create cinematic film installations......

[Document 2] Meche Barba Meche Barba

Meche Barba (born Mercedes Barba Feito; September 24, 1922 – January 14, 2000), was an American-born Mexican film actress and dancer of the Golden age of Mexican cinema in the 1940s and 1950s.......

[Document 3] Rosalba of the 2012 Italian Olympic Judo team.

Rosalba Neri (b. 1939), an Italian actress also known by the name "Sara Bey" or "Bay".......}
\newline
 \think{From the search, Rosa Barba was born in 1972. Now I will find the birth date of Lavinia Fontana.}\newline
    \search{Lavinia Fontana birth date}\newline
    \info{
    [Document 1] Lavinia Fontana Lavinia Fontana

Lavinia Fontana (1552 – August 11, 1614) was a Bolognese Mannerist painter best known for her portraiture. She was trained by her......

[Document 2] Lavinia Fontana mythological themes which sometimes included female nudes.

Fontana married Gian Paolo Zappi (alternate spellings include......

[Document 3] Lavinia Fontana was elected into the Accademia di San Luca of Rome. She died in the city on August 11, 1614 and was subsequently buried at Santa Maria sopra Minerva......
    }
    } 
   
    \\
    
    \midrule
    
    \textbf{Case 1: Failure} & \textbf{Case 2: Success} \\
    \midrule
    
    \think{Based on the information provided, Lavinia Fontana was born in 1552.}\newline
\answer{Artemisia Gentileschi}
    
    & 
    \think{From the search, Lavinia Fontana was born in 1552. Now that we have the birth years of all three artists:
\begin{itemize}
    \item Artemisia Gentileschi: 1593
    \item Rosa Barba: 1972
    \item Lavinia Fontana: 1552
\end{itemize}
Lavinia Fontana has the earliest birth date. }\newline
    
\answer{Lavinia Fontana}

    \\
    \bottomrule
    \end{tabularx}
    \caption{Case Study: Hallucination.}
    \label{tab:case_study:hallucianation}
\end{table*}

\subsection{Redundant Steps}
\label{app:redundant}
Here we show an example of redundant steps generated by Qwen2.5-7B-Instruct. As shown in Table~\ref{tab:case_study:redundant}, the model is able to produce the answer confidently after the first search, however, it does another two search actions to get the answer.

\subsection{BranPO}
We present example trajectories generated by two BranPO-trained models. As shown in Table~\ref{tab:case_study:success}, the models are able to accurately decompose the questions into multiple search steps and extract key information from the search engine responses. However, we observe that the generated queries still contain meaningless tokens such as “who is.” Table~\ref{tab:case_study:fail} shows a failure case. Toward the end of this trajectory, the model’s outputs become unstable. This further indicates that finer-grained rewards or higher-quality cold-start initialization are necessary to improve training stability.

\begin{table*}[t]
    \centering
    \small 
    \begin{tabularx}{\textwidth}{
        >{\hsize=1\hsize}X  
        |                   
        >{\hsize=1\hsize}X  
    }
    \toprule
    
    \multicolumn{2}{p{\dimexpr\textwidth-2\tabcolsep}}{
        \textbf{Question:} Who is the author of The Confidential Clerk?
    } \\ 
    \addlinespace[0.2em]
    \multicolumn{2}{p{\dimexpr\textwidth-2\tabcolsep}}{
        \textbf{Ground Truth:} T. S. Eliot
    } \\
    
    \midrule
    
    \multicolumn{2}{p{\dimexpr\textwidth-2\tabcolsep}}{
        \think{To answer who is the author of "The Confidential Clerk," I need to search for this title to see if I have any information about it.}\newline
        \search{The Confidential Clerk book} \newline
        \info{[Document 1] The Confidential Clerk The Confidential Clerk
The Confidential Clerk is a comic verse play by T. S. Eliot.
Synopsis.
Sir Claude Mulhammer......
[Document 2] The Confidential Clerk in 1953, before transferring to the West End. It played for almost a year at the Duke of York's Theatre before embarking on a UK Tour. It was produced by Henry Sherek and directed by E......
[Document 3] U.S.A. Confidential U.S.A. Confidential
U.S.A. Confidential is a 1952 book written by Jack Lait and Lee Mortimer (Crown Publishers). Its theme is crime and corruption.......
}\newline
\think{Now I have a basic idea of what "The Confidential Clerk" is. It's a comic verse play. I need to find out who wrote it.}\newline
\search{who wrote the confidential clerk play by t.s. eliot}\newline
 \info{ [Document 1] Rosa Barba Rosa Barba

Rosa Barba (born 1972, Agrigento, Italy) is a German-Italian visual artist and filmmaker. Barba is known for using the medium of film and its materiality to create cinematic film installations......

[Document 2] Meche Barba Meche Barba

Meche Barba (born Mercedes Barba Feito; September 24, 1922 – January 14, 2000), was an American-born Mexican film actress and dancer of the Golden age of Mexican cinema in the 1940s and 1950s.......

[Document 3] Rosalba of the 2012 Italian Olympic Judo team.

Rosalba Neri (b. 1939), an Italian actress also known by the name "Sara Bey" or "Bay".......}
\newline
     \think{Now we have confirmed that "The Confidential Clerk" is a play written by T. S. Eliot. I have the information I need to answer the question.}\newline
    \search{who is the author of The Confidential Clerk}\newline
    \info{
[Document 1] The Confidential Clerk The Confidential Clerk
The Confidential Clerk is a comic verse play by T. S. Eliot.
Synopsis.
Sir Claude Mulhammer......
[Document 2] The Confidential Clerk in 1953, before transferring to the West End. It played for almost a year at the Duke of York's Theatre before embarking on a UK Tour.......
[Document 3] U.S.A. Confidential U.S.A. Confidential
U.S.A. Confidential is a 1952 book written by Jack Lait and Lee Mortimer (Crown Publishers). Its theme is crime and corruption......
    }\newline
    \think{Based on the provided information, "The Confidential Clerk" is a play written by T. S. Eliot. Therefore, the author of "The Confidential Clerk" is T. S. Eliot.}\newline
    \answer{T. S. Eliot}
    } 
        \\
    \bottomrule
    \end{tabularx}
    \caption{Case Study: Redundant Step Generated by Qwen2.5-7B-Instruct from PopQA.}
 \label{tab:case_study:redundant}
\end{table*}

\begin{table*}[t]
    \centering
    \small 
    \begin{tabularx}{\textwidth}{
        >{\hsize=1\hsize}X  
        |                   
        >{\hsize=1\hsize}X  
    }
    \toprule
    
    \multicolumn{2}{p{\dimexpr\textwidth-2\tabcolsep}}{
        \textbf{Question:} Who is the federal leader of the political party Ken Epp belongs to?
    } \\ 
    \addlinespace[0.2em]
    \multicolumn{2}{p{\dimexpr\textwidth-2\tabcolsep}}{
        \textbf{Ground Truth:} Andrew Scheer
    } \\
    
    \midrule
    
    \multicolumn{2}{p{\dimexpr\textwidth-2\tabcolsep}}{
        \think{I need to determine the name of the political party that Ken Epp belongs to, and then find out the federal leader of that party.}\newline
        \search{who is ken epp} \newline
        \info{[Document 1] Ken Epp Ken Epp

Ken Epp (born May 11, 1939 in Swift Current, Saskatchewan) is a Canadian politician.

Epp was a member of the Conservative Party of Canada in the House of Commons of Canada, representing the riding of......
[Document 2] Ken Epp in Swift Current, Saskatchewan) is a Canadian politician.

Epp was a member of the Conservative Party of Canada in......
[Document 3] Epp (surname) ian and former politician

Herb Epp (1934–2013), Canadian politician

Jake Epp (born 1939), Canadian executive and former politician

Ken Epp (born 1939), Canadian politician......
}\newline
\think{Ken Epp is a Canadian politician. He is a member of the Conservative Party of Canada. I need to find out the current federal leader of the Conservative Party of Canada.}\newline
\search{who is the federal leader of the Conservative Party of Canada}\newline
 \info{ [Document 1] 2017 Conservative Party of Canada leadership election 2017 Conservative Party of Canada leadership election......

[Document 2] Conservative Party of Canada Conservative Party of Canada

The Conservative Party of Canada (), colloquially known as the Tories, is a federal political party in Canada.......

[Document 3] Conservative Party of Canada transfer of federal government powers to the provinces modeled after the Meech Lake Accord and a tougher stand on "law and order" issues........}\newline
     \think{Ken Epp is a Canadian politician and a member of the Conservative Party of Canada. The Conservative Party of Canada is the political party he belongs to. The current federal leader of the Conservative Party of Canada is Andrew Scheer. Therefore, the federal leader of the political party Ken Epp belongs to is Andrew Scheer.}\newline
    \answer{Andrew Scheer}}
    \\
    \bottomrule
    
    \end{tabularx}
    \caption{Case Study: Successful search trajectory generated by Qwen3-4B-Instruct-BranPO from MuSiQue.}
    \label{tab:case_study:success}
\end{table*}

\begin{table*}[t]
    \centering
    \small 
    \begin{tabularx}{\textwidth}{
        >{\hsize=1\hsize}X  
        |                   
        >{\hsize=1\hsize}X  
    }
    \toprule
    
    \multicolumn{2}{p{\dimexpr\textwidth-2\tabcolsep}}{
        \textbf{Question:} Who is Hieronymus, Son Of Charles Martel's paternal grandmother?
    } \\ 
    \addlinespace[0.2em]
    \multicolumn{2}{p{\dimexpr\textwidth-2\tabcolsep}}{
        \textbf{Ground Truth:} Alpaida
    } \\
    
    \midrule
    
    \multicolumn{2}{p{\dimexpr\textwidth-2\tabcolsep}}{
        \think{To answer this, I need to know who Hieronymus is and who is his father. I also need to find information about the paternal grandmother of Hieronymus. However, I'm not sure if Hieronymus is a person I'm familiar with, especially in relation to Charles Martel. I will start with a general search to see if I can find any information about Hieronymus.} \newline
        \search{Hieronymus Son Of Charles Martel}\newline
        \info{[Document 1] Hieronymus, son of Charles Martel Hieronymus, son of Charles Martel

Hieronymus (c. 722 - after 782), was the son of Charles Martel and his mistress Ruodhaid, and so was the brother of Bernard......
[Document 2] Carolingians descended from Charles Martel Carolingians descended from Charles Martel

This is a partial list of male descendants from Charles Martel (686–741) for fifteen generations.......
[Document 3] Hieronymus, son of Charles Martel rad (d. 31 January 826), Abbé de Saint-Quentin and an imperial ''missi of Charlemagne in 806.

Richarda, married Nithard.....
}\newline
\think{Now I have a basic information about Hieronymus, son of Charles Martel. He is a son of Charles Martel and Ruodhaid. I need to find out who Hieronymus' paternal grandmother is. Given that Hieronymus is a son of Charles Martel, his paternal grandmother would be the mother of Charles Martel. I need to find who Charles Martel's mother is.}\newline
\search{paternal grandmother of charles martel}\newline
 \info{[Document 1] Martel Martel

Martel may refer to:

People.......

[Document 2]Carolingians descended from Charles Martel Carolingians descended from Charles Martel

This is a partial list of male descendants from Charles Martel (686–741) for fifteen generations......
[Document 3] Charles Martel (disambiguation) Charles Martel (disambiguation)

Charles Martel (c. 688–741) was a Frankish military and political leader.

Charles Martel may also refer to:........}\newline
     \think{There is no specific information about the mother of Charles Martel in these documents. Given that Hieronymus is the son of Charles Martel and Ruodhaid, and we need to find Hieronymus' paternal grandmother, we already know that the paternal grandmother of Hieronymus is Ruodhaid. Now, we need to find who Hieronymus' paternal grandmother's father is, which is the same as finding who is the father of Ruodhaid. However, we don't have any information about Ruodhaid's father.

Given the available information, we can conclude that Hieronymus' paternal grandmother is Ruodhaid. The question asks for Hieronymus, Son Of Charles Martel's paternal grandmother. Given that Hieronymus is the son of Charles Martel and Ruodhaid, and we need to find the paternal grandmother of Hieronymus, the answer is Ruodhaid.}\newline
    \answer{Ruodhaid}}
    \\
    \bottomrule
    
    \end{tabularx}
    \caption{Case Study: Failed search trajectory generated by Qwen2.5-7B-Instruct-BranPO from 2WikiMultihopQA.}
    \label{tab:case_study:fail}
\end{table*}

\end{document}